\documentclass[USenglish,onecolumn,12pt]{article}

\usepackage[utf8]{inputenc}%(only for the pdftex engine)
\usepackage[big]{dgruyter}

\usepackage{amsmath,amssymb,amsthm,mathtools,bbm}
\usepackage{microtype}
\usepackage{enumitem}
\usepackage{hyperref}
\hypersetup{
  colorlinks=true,
  urlcolor=magenta,
  linkcolor=black,
  citecolor=black
}
\usepackage{bm}

\usepackage{threeparttable}
\usepackage{booktabs}
\usepackage{rotating}

\usepackage{algorithm}
\usepackage{algpseudocode}

\makeatletter
\providecommand{\plist@algorithm}{Algorithm~}
\makeatother

\makeatletter
\newtheorem*{rep@theorem}{\rep@title}
\newcommand{\newreptheorem}[2]{%
\newenvironment{rep#1}[1]{%
 \def\rep@title{#2 \ref{##1}}%
 \begin{rep@theorem}}%
 {\end{rep@theorem}}}
\makeatother

\newtheorem{assumption}{Assumption}
\newtheorem{theorem}{Theorem}
\newtheorem{lemma}{Lemma}
\newtheorem{proposition}{Proposition}

\newreptheorem{proposition}{Proposition}
\newreptheorem{theorem}{Theorem}
\newreptheorem{corollary}{Corollary}
\newreptheorem{lemma}{Lemma}

\begin{document}

  \articletype{}

  \author*[1]{Eric V. Strobl}

\affil[1]{Department of Biomedical Informatics, University of Pittsburgh, Pittsburgh, PA 15213, USA}

  \title{\huge Fast Nonparametric Conditional Independence Testing via Two-Stage Regression}

  \runningtitle{Two-stage regression for CI testing}

  %\subtitle{...}

  \begin{abstract}
{Constraint-based causal discovery relies on repeated conditional independence tests, but fast nonparametric tests often sacrifice calibration, especially when variables depend on the conditioning set through nonlinear relationships. We introduce BLITZ (Broad-to-Local Independence Testing via residualiZation), a nonparametric conditional independence test designed to run well under a second while maintaining the accuracy needed for the thousands of queries performed by constraint-based causal discovery algorithms. BLITZ first removes broad smooth dependence on the conditioning set using low-order polynomial regression, then applies a small nonlinear feature map and residualizes those features with shallow tree regressions. The resulting statistic tests residual cross-covariance, with a moment-matched chi-square approximation to the null distribution. We show theoretically that the two-stage design reduces the effective complexity faced by the tree residualizers, allowing shallow trees to control residual conditional-mean bias while avoiding excessive overfitting. In simulations, BLITZ provides better null calibration than fast kernel, random-feature, and regression-based competitors while remaining among the fastest methods tested. In causal discovery experiments on synthetic graphs and flow-cytometry data, BLITZ yields more reliable endpoint orientations among retained adjacencies and competitive structural recovery. These results suggest that broad-to-local residualization is a practical route to calibrated, scalable nonparametric conditional independence testing for causal discovery.}
\end{abstract}
  \keywords{causal discovery, conditional independence testing, nonparametric testing, constraint-based algorithms}
  % \classification[MSC]{Please put MSC 2010 codes here.}
 % \communicated{...}
 % \dedication{...}

%  \journalname{Journal of Causal Inference}
%\DOI{DOI}
%  \startpage{1}
%  \received{..}
%  \revised{..}
%  \accepted{..}

%  \journalyear{2019}
%  \journalvolume{1}
%  \journalissue{1}

\maketitle

\section{Introduction}

Causal discovery seeks to recover causal relations from data. Existing approaches broadly fall into three families: constraint-based methods, score-based methods, and functional-model-based methods \cite{Glymour19}. Score-based methods search over candidate graphs by optimizing a goodness-of-fit criterion, while functional-model-based methods identify causal structure by imposing asymmetries on the data-generating mechanism. Constraint-based methods take a different route: they infer causal structure from the conditional independence (CI) relations implied by the graph. This perspective is especially attractive because, in principle, it requires fewer parametric or functional assumptions than competing approaches \cite{Huang18}. In practice, however, this advantage depends on the availability of fast, reliable nonparametric CI tests that run well under one second.
Algorithms such as PC and FCI may require thousands of CI queries, so even modest per-test costs can quickly become a computational bottleneck \cite{Shiragur24}.

The reliability of CI tests affects two distinct stages of constraint-based discovery: skeleton recovery and endpoint orientation. During skeleton search, the algorithm begins with a dense graph and removes an adjacency whenever it finds a separating set \cite{Spirtes00}. In the finite sample sizes encountered in practice, especially in nonparametric settings where reliable detection often requires substantially larger samples than parametric methods, an anti-conservative CI test can appear advantageous for adjacency recovery. By rejecting conditional independence more often, such a test removes fewer edges and may therefore preserve more true adjacencies.

This apparent skeleton-level advantage usually does not translate into reliable orientations, even though orientations are often the primary objects of scientific interest. Endpoint marks are determined not only by the recovered skeleton, but also by the separating sets recorded during skeleton search. If a CI test is anti-conservative, it often rejects true conditional independencies at smaller valid separating sets, causing the search to continue to larger conditioning sets. As conditioning sets grow, power typically deteriorates, so the test becomes more likely to eventually accept an independence for statistical rather than graphical reasons. These distorted separating sets then directly affect v-structure discovery. For an unshielded triple \(X \mathbin{*\!\! - \!\! *} Z \mathbin{*\!\! - \!\! *} Y\), constraint-based algorithms orient \(X \mathbin{*\!\!\to} Z \mathbin{\leftarrow\!\!*} Y\) only when \( Z \) is absent from the separating set for \(X\) and \(Y\) \cite{Ramsey06}. If the accepted separating set is unnecessarily large, it is more likely to include the middle node \( Z \), preventing correct collider orientation. Such errors can then propagate through subsequent orientation rules, producing incorrect endpoint marks even when the skeleton appears plausible. Thus, calibrated CI tests are particularly important for recovering the separating sets that make endpoint orientation reliable.

Unfortunately, calibration and computational efficiency are often in tension in CI testing. Valid \(p\)-values require the test to account for the information in the conditioning set, often through flexible models that can explain complex dependence between the target variables and the covariates. These models can be computationally expensive and, in finite samples, prone to overfitting and reduced power. Indeed, without restrictions on the data-generating distribution, fully valid CI tests actually have no power against any alternative \cite{Shah20}. Most fast CI tests therefore trade calibration for speed and power by using simple models \cite{Strobl19,Chalupka18}, but these approximations then fail to explain away enough conditional structure, yielding anti-conservative \(p\)-values even with only a few thousand samples. We therefore need a test fast enough for thousands of CI queries while still modeling rich conditional relationships well enough to remain calibrated for nearly all relations encountered in practice.

In this paper, we introduce BLITZ (\textbf{B}road-to-\textbf{L}ocal \textbf{I}ndependence \textbf{T}esting via residuali\textbf{Z}ation), a fast, calibrated nonparametric CI test designed for repeated use inside constraint-based causal discovery algorithms. BLITZ uses a two-stage residualization strategy to remove dependence on the conditioning set \(\bm Z \). The first stage applies a lightweight global residualizer, such as low-order polynomial regression, to remove broad smooth structure from \(X\) and \(Y\). The second stage further residualizes the first-stage residuals using a shallow tree regression. Conditional independence is then tested through the cross-covariance of the resulting second-stage residuals. We show that this broad-to-local design lets the polynomial stage remove global structure that trees would otherwise require many leaves to approximate, thereby substantially reducing the tree size needed in the second stage. The tree stage can then remain shallow and fast while spending its limited adaptivity on residual localized nonlinearities and thresholds. Empirically, BLITZ achieves calibration, speed, and power in many problem settings encountered in practice, leading to substantially more reliable endpoint orientations during causal discovery.

\section{Related Work}

CI testing spans a broad spectrum of methods, from classical parametric procedures to fully nonparametric tests. Methods that rely on strong parametric assumptions can often be very fast. The classical example is Fisher's \(z\)-test, which reduces CI testing to testing whether a partial correlation is zero under Gaussian assumptions \cite{Kalisch07}. Computationally, this requires only entries of the sample covariance matrix and inversion of the covariance submatrix corresponding to the conditioning set. By contrast, most broadly applicable nonparametric CI tests remain computationally burdensome because they rely on large kernel matrices, repeated resampling, or expensive local computations \cite{Runge18, Doran14}. Since this paper focuses on \textit{fast}, sub-second nonparametric CI testing, we do not attempt an exhaustive review of the broader nonparametric literature.

Among the faster nonparametric alternatives, RCIT and RCoT remain the clearest baselines \cite{Strobl19}. Both methods start from the kernel-based conditional independence test (KCIT) framework \cite{Zhang11} but replace exact kernel operations with random Fourier feature approximations, thereby reducing the cost of nonlinear CI testing substantially while retaining the basic kernel-testing perspective. In practice, they scale approximately linearly with sample size and were introduced explicitly to make nonparametric causal discovery feasible on larger datasets, where KCIT would otherwise be too slow to use routinely.

A second line of work accelerates CI testing by replacing kernel machinery with predictive or regression-based surrogates. FCIT adopts this strategy by testing whether adding the candidate variable to the conditioning set yields a statistically significant improvement in out-of-sample prediction \cite{Chalupka18}; this design makes the method simple, scalable, and attractive in large-sample settings, although it is ultimately heuristic because conditional dependence need not always manifest as improved conditional-mean prediction. CCI follows a related residual-based philosophy: it first removes the effect of the conditioning set and then searches for remaining nonlinear dependence through a truncated collection of basis-function correlations among residuals \cite{Ramsey14}. These methods are appealing because they are lightweight and easy to deploy, but they generally trade some of the distributional generality of kernel tests for computational speed.

FastKCI occupies a useful middle ground between these lightweight surrogates and classical kernel CI tests \cite{Schacht25}. Rather than abandoning the kernel CI statistic, it partitions the conditioning space using a mixture-of-experts model, runs local kernel CI tests on the resulting blocks in parallel, and aggregates the blockwise statistics through importance weighting. This design preserves more of the original kernel-testing logic, while still achieving substantial speedups relative to KCIT. At the same time, FastKCI remains more computationally involved than the lightest random-feature or regression-based methods, so it is best viewed as a fast kernel method rather than a truly ultralight CI test.

Despite their computational advantages, RCIT, RCoT, FCIT, CCI, and FastKCI share a recurring practical limitation: imperfect Type I error control. In finite samples, these methods can become anti-conservative or poorly calibrated when \(X\) and \(Y\) depend on the set \(\bm Z \) in complex ways, or when larger sample sizes make small residual conditioning errors detectable. Figure~\ref{fig:histograms} illustrates this behavior through null \(p\)-value histograms. The central difficulty is that these methods often leave enough residual dependence on \(\bm Z\) to shift the association statistic away from its null center. Adding random Fourier features or reducing a kernel bandwidth can absorb more \(\bm Z\)-induced structure, but only by substantially expanding the effective function class. This added flexibility increases variance, raises computational cost, and can reduce power by overfitting residual signal.

\begin{figure}[!t]
    \centering
    \includegraphics[width=0.8\linewidth]{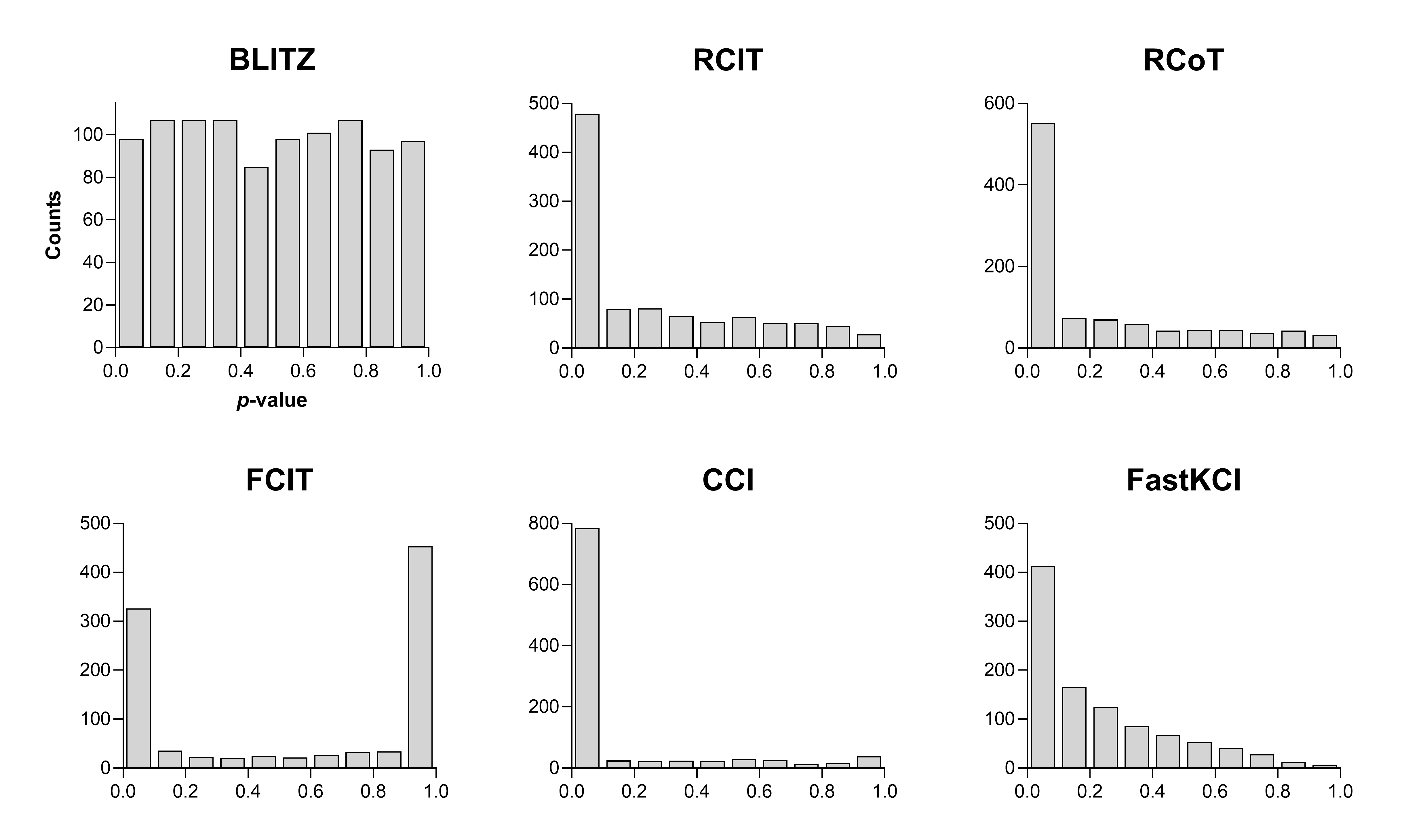}
\caption{\textbf{Representative null \(p\)-value histograms across CI tests.} All histograms share the same x- and y-axis labels. Calibration was evaluated under the nonlinear null settings described in the experiments. The histograms shown here correspond to a sample size of 5000 and a conditioning set size of 3. BLITZ produces an approximately uniform null \(p\)-value distribution, whereas RCIT, RCoT, CCI, and FastKCI show excess mass near zero, indicating anti-conservative behavior. The FCIT histogram is also visibly U-shaped, suggesting additional miscalibration beyond simple inflation of small \(p\)-values.}
    \label{fig:histograms}
\end{figure}

As discussed before, miscalibration in turn affects both adjacency recovery and orientation, but its consequences are especially damaging for endpoint marks. Anti-conservative tests may preserve more true adjacencies by rejecting independence too often, yet the same behavior can distort the separating sets used to orient \(v\)-structures. Thus, a method may return a plausible skeleton while still producing unreliable edge orientations, even though orientation is often the primary reason scientists use causal discovery methods. We address this problem by developing BLITZ, a fast and well-calibrated CI test designed for the smooth, moderately structured conditional relationships frequently encountered in constraint-based search. We show that the method controls residual dependence on \(\bm Z \) for broad classes of H\"older-smooth regression targets, while still respecting the impossibility of fully assumption-free CI testing with power against all alternatives \cite{Shah20}.

\section{BLITZ Algorithm}

We propose a fast nonparametric CI test using a two-stage residualization approach. The main idea is to use a simple global model to remove broad, smooth dependence on \(\bm Z \), and then use a flexible but shallow local model to remove the remaining residual structure. This division of labor lets the second stage focus its limited adaptivity on localized nonlinearities rather than spending many degrees of freedom approximating global trends.

We summarize the corresponding BLITZ CI test in Algorithm~\ref{alg:blitz}. The method begins with the first-stage residualization step. When \(\bm Z \) is available, BLITZ fits low-order polynomial regressions of \(X\) and \(Y\) on \(\bm Z \) and replaces both variables by their residuals in Lines \ref{alg:poly1}--\ref{alg:poly2}. This stage removes broad smooth dependence on the conditioning variables before the adaptive tree stage is used.

Line \ref{alg:transforms} then constructs the features used in the residual covariance test. Let \(\Phi_X\) and \(\Phi_Y\) denote the finite nonlinear feature maps applied to the standardized first-stage residuals. In our implementation, each coordinate \(u\) is mapped to the two swish-type features \cite{Ramachandran17}
\[
\phi_-(u)=-u\sigma(-u), \qquad \phi_+(u)=u\sigma(u),
\]
where \(\sigma(u)=(1+\exp(-u))^{-1}\). We chose these functions because their span is
\(
\operatorname{span}\{\phi_-,\phi_+\}
=
\operatorname{span}\{u,\;u\tanh(u/2)\},
\) where \(u\tanh(u/2)\) behaves like a smooth absolute-value feature: near zero,
\(
u\tanh(u/2) \approx \frac{u^2}{2},
\)
whereas for large \(|u|\),
\(
u\tanh(u/2) \approx |u|.
\)
Thus the dictionary preserves the linear residual direction while adding a smooth magnitude-like component, allowing BLITZ to detect asymmetric and nonlinear residual associations at little computational cost.

The second-stage residualization block removes remaining dependence on \(\bm Z \) from the transformed features (Lines \ref{alg:tree1}--\ref{alg:tree3}). BLITZ fits trees separately on the \(X\)- and \(Y\)-sides, with tree-size parameters selected by out-of-bag prediction. If \(\bm Z \) is empty, this step reduces to ordinary centering.

Finally, Algorithm~\ref{alg:blitz} forms the residual cross-covariance matrix \(\widehat C\) and computes \(S=n\|\widehat C\|_F^2\). Large values of \(S\) indicate residual association between the transformed \(X\)- and \(Y\)-side features after both stages of residualization. The null distribution is approximated by a moment-matched weighted chi-square law using the empirical eigenvalues of the product-feature covariance matrix. 

Finally, the computational complexity analysis in Supplementary Materials~\ref{supp:complexity} shows that, for fixed conditioning-set size and fixed tuning parameters, BLITZ has quasilinear \(O(n\log n)\) scaling in the sample size. A fast R implementation with C++ back-end is available at \url{https://github.com/ericstrobl/BLITZ}.

\begin{algorithm}[!h]
\caption{BLITZ: Broad-to-Local Independence Testing via Residualization}
\label{alg:blitz}
\begin{algorithmic}[1]
\Require Observations \(\{(X_i,Y_i,Z_i)\}_{i=1}^n\), polynomial degree \(r\), tree tuning grid \(\mathcal G\)
\Ensure Test statistic \(S\) and \(p\)-value

\If{\(\bm Z \) is nonempty}
    \State Fit separate polynomial regressions of \(X\) on \(\bm Z\) and \(Y\) on
\(\bm Z\), including interactions up to degree \(r\). \label{alg:poly1}
    \State Form first-stage residuals
    \(
    \widetilde X_i = X_i-\widehat m_X(Z_i)\) and \(
    \widetilde Y_i = Y_i-\widehat m_Y(Z_i).
    \) \label{alg:poly2}
\Else
    \State Set \(\widetilde X=X\) and \(\widetilde Y=Y\).
\EndIf

\State Standardize \( \widetilde X \) and \( \widetilde Y \) to unit variance.

\State Construct nonlinear feature matrices \(G_X=\Phi_X(\widetilde X)\) and \(F_Y=\Phi_Y(\widetilde Y)\). \label{alg:transforms}

\If{\(\bm Z \) is nonempty} \label{alg:tree1}
    \State Select the \(X\)- and \(Y\)-side tree-size parameters separately by
out-of-bag prediction of \(G_X\) from \(\bm Z\) and \(F_Y\) from \(\bm Z\)
over \(\mathcal G\).
    \State Residualize \(G_X\) on \(\bm Z \) using trees, producing \(\widehat{\bm U}_X\). \label{alg:tree2}
    \State Residualize \(F_Y\) on \(\bm Z \) using trees, producing \(\widehat{\bm U}_Y\).
\Else
    \State Center \(G_X\) and \(F_Y\), producing \(\widehat{\bm U}_X\) and \(\widehat{\bm U}_Y\).
\EndIf \label{alg:tree3}

\State Compute the residual cross-covariance matrix:
\(
\widehat C
=
\frac1n \widehat{\bm U}_X^\top \widehat{\bm U}_Y.
\)

\State Compute the test statistic:
\(
S=n\|\widehat C\|_F^2.
\)

\State Estimate the null distribution from the eigenvalues of the product-feature covariance matrix.

\State Compute a moment-matched chi-square \(p\)-value.

\State \Return \(S\) and \(p\)-value.
\end{algorithmic}
\end{algorithm}

\section{Theoretical Results}
\subsection{Overall Strategy}

BLITZ uses the following residual characterization of conditional independence.

\begin{theorem}\label{thm:residuals}
Let \((\bm X,\bm Y,\bm Z)\) be random variables with \(\bm X\in\mathbb R^{d_X}\) and
\(\bm Y\in\mathbb R^{d_Y}\). Let \(m_{\bm X}\) and \(m_{\bm Y}\) be any measurable functions of
\(\bm Z \), and define
\[
R_{\bm X} \coloneqq \bm X-m_{\bm X}(\bm Z), \qquad R_{\bm Y} \coloneqq \bm Y-m_{\bm Y}(\bm Z).
\]
Then
\[
\bm X\perp\!\!\!\perp \bm Y\mid \bm Z
\quad\Longleftrightarrow\quad
\operatorname{Cov}\{f(R_{\bm X}),g(R_{\bm Y})\mid \bm Z\}=0
\quad\text{a.s.}
\]
for all square-integrable measurable functions \(f\) and \(g\).
\end{theorem}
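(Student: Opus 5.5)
The plan is to reduce the statement to the fact that, conditionally on \(\bm Z\), the map \((\bm X,\bm Y)\mapsto(R_{\bm X},R_{\bm Y})\) is a bijection. I would then use the standard characterization of conditional independence through conditional covariances of bounded test functions. Throughout, I work with regular conditional distributions of \((\bm X,\bm Y)\) given \(\bm Z\), which exist because the spaces are Euclidean. I interpret the right-hand side as: for every pair of square-integrable \(f,g\), the conditional covariance vanishes almost surely, with the null set allowed to depend on \((f,g)\).

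For the forward direction, assume \(\bm X\perp\!\!\!\perp \bm Y\mid \bm Z\). Since \(R_{\bm X}=h_1(\bm X,\bm Z)\) and \(R_{\bm Y}=h_2(\bm Y,\bm Z)\) for measurable \(h_1,h_2\), conditional independence is preserved: \(f(R_{\bm X})\perp\!\!\!\perp g(R_{\bm Y})\mid \bm Z\). This uses the standard fact that if \(\bm X\perp\!\!\!\perp\bm Y\mid\bm Z\), then \((\bm X,\bm Z)\perp\!\!\!\perp(\bm Y,\bm Z)\mid\bm Z\). It follows that
\[
E\{f(R_{\bm X})g(R_{\bm Y})\mid\bm Z\}=E\{f(R_{\bm X})\mid\bm Z\}\,E\{g(R_{\bm Y})\mid\bm Z\}\quad\text{a.s.}
\]
This identity is first verified for bounded \(f,g\) by the factorization of the regular conditional law. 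It then extends to square-integrable \(f,g\) by truncation together with conditional dominated convergence, using Cauchy--Schwarz to guarantee integrability of the product. Hence the conditional covariance is zero a.s.

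For the converse, I restrict the hypothesis to indicator functions \(f=\mathbbm 1_A\) and \(g=\mathbbm 1_B\) for Borel sets \(A,B\). This gives
\[
P(R_{\bm X}\in A,R_{\bm Y}\in B\mid\bm Z)=P(R_{\bm X}\in A\mid\bm Z)\,P(R_{\bm Y}\in B\mid\bm Z)\quad\text{a.s.}
\]
The main obstacle is that each such identity holds only off a null set depending on \((A,B)\), while conditional independence needs a single null set. To handle this, I would restrict \(A,B\) to the countable \(\pi\)-system of rational rectangles and take a countable union of null sets. On the complement, the regular conditional law of \((R_{\bm X},R_{\bm Y})\) given \(\bm Z=z\) agrees with the product of its marginals on a generating \(\pi\)-system, hence on all Borel sets by the \(\pi\)--\(\lambda\) theorem. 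So \(R_{\bm X}\perp\!\!\!\perp R_{\bm Y}\mid\bm Z\).

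Finally, I transfer this back to \((\bm X,\bm Y)\). Under the conditional law given \(\bm Z=z\), \(\bm X=R_{\bm X}+m_{\bm X}(z)\) is a deterministic translate of \(R_{\bm X}\), and likewise \(\bm Y=R_{\bm Y}+m_{\bm Y}(z)\). Translation preserves the product structure of the conditional law, so the conditional law of \((\bm X,\bm Y)\) given \(\bm Z=z\) factorizes for a.e. \(z\). Formally, for Borel \(A,B\) I would write \(\{\bm X\in A\}=\{R_{\bm X}\in A-m_{\bm X}(\bm Z)\}\) and compute conditional probabilities via the disintegration, using joint measurability of \((r,z)\mapsto \mathbbm 1\{r+m_{\bm X}(z)\in A\}\). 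This yields \(\bm X\perp\!\!\!\perp\bm Y\mid\bm Z\).
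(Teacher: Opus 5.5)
Your proposal is correct and follows essentially the same route as the paper: translating by \(m_{\bm X}(\bm Z)\) and \(m_{\bm Y}(\bm Z)\) loses no information given \(\bm Z\), conditional independence gives factorization of conditional expectations in the forward direction, and indicator test functions give the converse. Your extra steps are valid: truncation to pass from bounded to square-integrable \(f,g\), and the countable \(\pi\)-system plus disintegration argument to transfer back to \((\bm X,\bm Y)\). However, the paper's observation that \(\sigma(\bm X,\bm Z)=\sigma(R_{\bm X},\bm Z)\) and \(\sigma(\bm Y,\bm Z)=\sigma(R_{\bm Y},\bm Z)\) makes that transfer immediate, and the standard definition of conditional independence already lets the null set depend on \((A,B)\), so the single-null-set step can be bypassed.
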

\noindent We delegate proofs to Supplementary Materials \ref{supp:proofs}. Theorem~\ref{thm:residuals} separates the construction into two approximation
problems. First, we may choose \(m_{\bm X}\) and \(m_{\bm Y}\) from a computationally simple
function class to remove broad dependence on \(\bm Z \). Second, after forming
\(R_{\bm X}=\bm X-m_{\bm X}(\bm Z)\) and \(R_{\bm Y}=\bm Y-m_{\bm Y}(\bm Z)\), we apply transformations \(f\) and \(g\)
to the residuals and residualize \(f(R_{\bm X})\) and \(g(R_{\bm Y})\) with respect to
\(\bm Z \) using a potentially different learner. This two-stage construction lets
the initial residualization and the final conditional-association search cover
complementary parts of the function space. In practice, one need not enumerate
all square-integrable transformations. As in approximate kernel-based conditional
independence testing, a finite collection of suitably chosen features can
provide a tractable approximation to a richer function class \cite{Strobl19}.

The benefit of this construction is amplified by the form of the test
statistic. The statistic does not require either second-stage residualizer to
remove all remaining dependence on \(\bm Z \). Instead, the relevant bias term is a
product of the residual conditional-mean errors on the two sides.

\begin{lemma}[Product bound for residual covariance bias]
\label{lem:double}
Let \(\widehat{u}_X,\widehat{u}_Y\in\mathbb R^n\) denote the
second-stage BLITZ residuals for a chosen pair of transformed features, and let
\(\widehat C=n^{-1}\widehat{u}_X^\top\widehat{u}_Y\). Let
\(\mathcal Z_n=\sigma(\bm Z_1,\ldots,\bm Z_n)\), and define
\[
e_{X,n}
\coloneqq
\left(
\frac1n
\left\|
\mathbb E[\widehat{u}_X\mid \mathcal Z_n]
\right\|^2
\right)^{1/2},
\qquad
e_{Y,n}
\coloneqq
\left(
\frac1n
\left\|
\mathbb E[\widehat{u}_Y\mid \mathcal Z_n]
\right\|^2
\right)^{1/2}.
\]
Assume i.i.d. observations and \(H_0:X\perp\!\!\!\perp Y\mid \bm Z\). Then
\(\left|\mathbb E[\widehat C\mid \mathcal Z_n]\right|\le e_{X,n}e_{Y,n}\).
Consequently, if \(e_{X,n}e_{Y,n}\in o_p(n^{-1/2})\), then
\(\sqrt n\,\mathbb E[\widehat C\mid \mathcal Z_n]\to 0\) in probability. In
particular, this holds if \(e_{X,n}\in o_p(n^{-1/4})\) and
\(e_{Y,n}\in o_p(n^{-1/4})\).
\end{lemma}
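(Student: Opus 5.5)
The plan is to compute \(\mathbb E[\widehat C\mid\mathcal Z_n]\) exactly as an inner product of two conditional-mean vectors, and then bound that inner product by Cauchy--Schwarz. The key structural observation is that the BLITZ pipeline in Algorithm~\ref{alg:blitz} is \emph{side-separated}. The \(X\)-side residual vector \(\widehat u_X\) is a measurable function of \((X_1,\ldots,X_n,\bm Z_1,\ldots,\bm Z_n)\) together with the algorithm's internal randomization (bootstrap or out-of-bag splits for the \(X\)-side trees). This covers the polynomial fit, the standardization, the feature map \(\Phi_X\), the out-of-bag tuning and the tree residualization. Symmetrically, \(\widehat u_Y\) is a function of \((Y_1,\ldots,Y_n,\bm Z_1,\ldots,\bm Z_n)\) and the \(Y\)-side randomization. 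I will assume, as is natural for the implementation, that the two sides' internal randomizations are independent of each other and of the data. Otherwise I would condition on the shared randomization throughout, which changes nothing below. I also assume the features have finite second moments so that all conditional expectations exist; the swish features grow at most linearly, so this follows from second moments of the first-stage residuals.

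\emph{Step 1: conditional independence of the sample vectors.} Under i.i.d.\ sampling and \(H_0\), I will show that
\[
(X_1,\ldots,X_n)\perp\!\!\!\perp (Y_1,\ldots,Y_n)\mid \mathcal Z_n .
\]
The argument is to factor the joint conditional law. By independence across \(i\), the conditional law of \(\{(X_i,Y_i)\}_i\) given \(\mathcal Z_n\) is the product over \(i\) of the laws of \((X_i,Y_i)\) given \(\bm Z_i\). By \(H_0\), each factor splits as the law of \(X_i\) given \(\bm Z_i\) times the law of \(Y_i\) given \(\bm Z_i\). Regrouping the product gives the claim. Adding the independent randomizations preserves it, so \(\widehat u_X\perp\!\!\!\perp\widehat u_Y\mid\mathcal Z_n\).

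\emph{Step 2: factorization.} Write \(a\coloneqq\mathbb E[\widehat u_X\mid\mathcal Z_n]\) and \(b\coloneqq\mathbb E[\widehat u_Y\mid\mathcal Z_n]\). By Step 1, for each \(i\) we have \(\mathbb E[\widehat u_{X,i}\widehat u_{Y,i}\mid\mathcal Z_n]=a_i b_i\). Therefore
\[
\mathbb E[\widehat C\mid\mathcal Z_n]=\frac1n a^\top b .
\]
Note that this does not require the residuals to be conditionally uncorrelated across different indices \(i\neq j\). The trees do couple observations, but only within each side, and the statistic only pairs index \(i\) with index \(i\).

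\emph{Step 3: Cauchy--Schwarz and the limits.} Cauchy--Schwarz gives
\[
\left|\frac1n a^\top b\right|\le \left(\frac1n\|a\|^2\right)^{1/2}\left(\frac1n\|b\|^2\right)^{1/2}=e_{X,n}e_{Y,n}.
\]
Multiplying by \(\sqrt n\) yields \(\sqrt n\,|\mathbb E[\widehat C\mid\mathcal Z_n]|\le \sqrt n\,e_{X,n}e_{Y,n}\), which is \(o_p(1)\) whenever \(e_{X,n}e_{Y,n}\in o_p(n^{-1/2})\). The final claim is immediate because \(o_p(n^{-1/4})\cdot o_p(n^{-1/4})=o_p(n^{-1/2})\).

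I expect the main obstacle to be Step 1 and the justification that nothing in the pipeline mixes the two sides. Concretely, this means checking that the standardization, the tuning grid selection and the tree-bagging randomness on the \(X\)-side never use \(Y\), and vice versa. If a shared random seed were used, I would first condition on it and verify that it is independent of the data, so that the conditional-independence argument goes through under the enlarged conditioning \(\sigma\)-field.
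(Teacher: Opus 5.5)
Your proof is correct and follows the paper's argument: i.i.d.\ sampling plus \(H_0\) gives \(X_{1:n}\perp\!\!\!\perp Y_{1:n}\mid\mathcal Z_n\), the side-separated construction then yields \(\mathbb E[\widehat C\mid\mathcal Z_n]=n^{-1}\mu_X^\top\mu_Y\), and Cauchy--Schwarz finishes. Your explicit factorization in Step 1 and your treatment of internal randomization add detail the paper leaves implicit; note only that with a shared seed the bound holds with \(e_{X,n},e_{Y,n}\) defined conditionally on the enlarged \(\sigma\)-field, not on \(\mathcal Z_n\) alone, so conditioning on the seed does slightly change the quantities in the statement.
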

\noindent Thus the two stages only need to make the product \(e_{X,n}e_{Y,n}\) small on
the covariance scale. For a root-\(n\) covariance statistic, the relevant
asymptotic requirement is \(e_{X,n}e_{Y,n}\in o_p(n^{-1/2})\), which in the
balanced case is achieved by only requiring each side to have conditional-mean error
\(o_p(n^{-1/4})\).

\subsection{Complementary Approximation by Two Function Classes}

Let \(\mathcal M_n\) denote the first-stage residualization class and let
\(\mathcal A_K\) denote a second-stage approximation class indexed by a
complexity parameter \(K\). Here \(K\) denotes the size of the second-stage
approximator, such as the number of tree leaves, basis functions, knots, or
random features. For a transformation \(\phi\), define the second-stage
regression target induced by a first-stage residualizer \(m_X\) as
\[
\eta_{\phi,m}(\bm z)
\coloneqq
\mathbb E\{\phi(X-m_X(\bm Z))\mid \bm Z=\bm z\}.
\]
The first-stage residualizer therefore does not merely reduce prediction error
for \(X\); it changes the function that the second-stage class must
approximate.

For a generic target \(\eta\), define
\(
\Delta_{\mathcal A_K}(\eta)
\coloneqq
\inf_{a\in\mathcal A_K}\|\eta-a\|_{L_2(\mathbb{P}_{\bm Z})}.
\) We make two assumptions:

\begin{assumption}[Second-stage approximation bound]\label{assump:standard_bound}
For the targets under consideration, the second-stage class satisfies a
standard approximation bound of the form
\begin{equation}\label{eq:standard_bound}
\Delta_{\mathcal A_K}(\eta)
\le
C\,\mathcal J(\eta)\,K^{-\alpha},
\end{equation}
for some \(C>0\) and \(\alpha>0\). Here \(C\) is independent of \(K\), while
\(\alpha\) gives the rate at which the approximation error decreases as \(K\)
grows. The \textit{second-stage complexity constant} \(\mathcal J(\eta)\)
measures the target-specific size or roughness within that rate class.
\end{assumption}

\begin{assumption}[Complementary complexity reduction]\label{assump:complexity_J}
For each side \(q\in\{X,Y\}\), there exists a first-stage residualizer
\(m_q^\star\in\mathcal M_n\) such that
\[
\mathcal J(\eta_{q,m_q^\star})
\le
\rho_q\,\mathcal J(\eta_{q,0}),
\qquad 0<\rho_q<1,
\]
where \(\eta_{q,0}\) denotes the target faced by the second-stage learner
without first-stage residualization.
\end{assumption}

Assumption~\ref{assump:complexity_J} does not require the first-stage class to
remove the second-stage target exactly. It only requires the first stage to
reduce the effective complexity of the target faced by the second stage. This
is reasonable when \(\mathcal M_n\) captures structure that is simple in a
global basis but inefficient for \(\mathcal A_K\) to approximate.

We obtain the following important result:
\begin{lemma}[Reduction in sufficient second-stage complexity]\label{lem:complexity_reduction}
Suppose Assumption~\ref{assump:standard_bound} holds. For a single side, write
\(\rho=\rho_q\). Under Assumption~\ref{assump:complexity_J}, the value of
\(K\) sufficient to approximate \(\eta_{q,m_q^\star}\) to tolerance
\(\tau>0\) is smaller than the value of \(K\) sufficient to approximate
\(\eta_{q,0}\) to the same tolerance by the factor \(\rho^{1/\alpha}\).
\end{lemma}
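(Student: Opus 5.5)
The argument is a direct inversion of the approximation bound in Assumption~\ref{assump:standard_bound}. First I make precise what ``the value of \(K\) sufficient to approximate \(\eta\) to tolerance \(\tau\)'' means. The natural choice, and the one that makes the statement exact, is the value certified by the upper bound \eqref{eq:standard_bound}. By that bound, \(\Delta_{\mathcal A_K}(\eta)\le\tau\) holds whenever \(C\,\mathcal J(\eta)\,K^{-\alpha}\le\tau\). Solving for \(K\) gives the sufficient complexity
\[
K_\tau(\eta)\coloneqq\left(\frac{C\,\mathcal J(\eta)}{\tau}\right)^{1/\alpha}.
\]
Since \(C\) does not depend on \(K\) or on the target, \(K_\tau(\eta)\) is increasing in \(\mathcal J(\eta)\) and scales as \(\mathcal J(\eta)^{1/\alpha}\). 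If \(K\) must be an integer, I use \(\lceil K_\tau(\eta)\rceil\) and note that the ratio statement is about the continuous thresholds, or holds up to rounding.

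Next I apply this with the two targets \(\eta_{q,m_q^\star}\) and \(\eta_{q,0}\) and the same \(\tau\). Both targets are assumed to lie among ``the targets under consideration,'' so Assumption~\ref{assump:standard_bound} applies to each with the same \(C\) and \(\alpha\). Taking the ratio cancels \(C\) and \(\tau\):
\[
\frac{K_\tau(\eta_{q,m_q^\star})}{K_\tau(\eta_{q,0})}=\left(\frac{\mathcal J(\eta_{q,m_q^\star})}{\mathcal J(\eta_{q,0})}\right)^{1/\alpha}\le\rho^{1/\alpha}.
\]
The final inequality follows from Assumption~\ref{assump:complexity_J} together with the monotonicity of \(t\mapsto t^{1/\alpha}\) for \(\alpha>0\). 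Equivalently, \(K=\rho^{1/\alpha}K_\tau(\eta_{q,0})\) already satisfies \(C\,\mathcal J(\eta_{q,m_q^\star})K^{-\alpha}\le C\rho\,\mathcal J(\eta_{q,0})\,\rho^{-1}K_\tau(\eta_{q,0})^{-\alpha}=\tau\). This is exactly the claim: the sufficient complexity shrinks by at least the factor \(\rho^{1/\alpha}<1\).

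The main point needing care is interpretive rather than technical. The lemma concerns a \emph{sufficient} \(K\) guaranteed by the bound, not the true minimal \(K\) at which \(\Delta_{\mathcal A_K}\le\tau\). Only an upper bound on \(\Delta\) is available, so nothing can be said about the exact minimal \(K\), and I will state this explicitly. I also need to handle degenerate cases. If \(\mathcal J(\eta_{q,0})=0\), Assumption~\ref{assump:complexity_J} forces \(\mathcal J(\eta_{q,m_q^\star})=0\), both sufficient complexities are trivial, and the statement holds vacuously. Otherwise the ratio above is well defined.
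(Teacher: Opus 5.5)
Your proposal is correct and follows the paper's proof: both invert \(C\,\mathcal J(\eta)K^{-\alpha}\le\tau\) to get the sufficient threshold \((C\,\mathcal J(\eta)/\tau)^{1/\alpha}\), then substitute \(\mathcal J(\eta_{q,m_q^\star})\le\rho\,\mathcal J(\eta_{q,0})\) to factor out \(\rho^{1/\alpha}\). The paper leaves implicit the points you make explicit: integer rounding of \(K\), the use of a common constant \(C\) for both targets, and the degenerate case \(\mathcal J(\eta_{q,0})=0\).
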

\noindent Lemma~\ref{lem:complexity_reduction} says that if the first-stage
residualizer lowers the second-stage complexity constant \(\mathcal J(\eta)\),
then the second-stage learner can reach the same approximation accuracy with a
smaller value of \(K\). The reduction in \(K\) is amplified when the
second-stage approximation rate is slow: since the factor is
\(\rho^{1/\alpha}\) and \(0<\rho<1\), smaller values of \(\alpha\) make the
reduction much larger. Thus even a modest reduction in \(\mathcal J(\eta)\) can
substantially reduce the second-stage complexity required to reach a fixed
tolerance.

This reduction in second-stage complexity is further amplified by the product structure of the covariance statistic:
\begin{proposition}[Product-complexity reduction for covariance bias]\label{prop:product_complexity}
Suppose the second-stage errors satisfy
\[
e_{X,n}\lesssim C_XJ_XK_X^{-\alpha},
\qquad
e_{Y,n}\lesssim C_YJ_YK_Y^{-\alpha}.
\]
If the first stage reduces the second-stage complexity constants by factors
\(\rho_X\) and \(\rho_Y\), then, at fixed \(K_X\) and \(K_Y\), the
covariance-bias bound is reduced by \(\rho_X\rho_Y\). Equivalently, to reach
the same covariance-bias tolerance, the sufficient product complexity
\(K_XK_Y\) is reduced by
\(
(\rho_X\rho_Y)^{1/\alpha}.
\)
In the symmetric case where \(\rho=\rho_X=\rho_Y\), this becomes
\(
\rho^{2/\alpha}.
\)
\end{proposition}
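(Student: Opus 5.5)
We plan to combine Lemma~\ref{lem:double} with the assumed rate bounds on the two sides, and then do the same inversion as in Lemma~\ref{lem:complexity_reduction}, now applied to the product $K_XK_Y$. First, we bound the covariance bias. By Lemma~\ref{lem:double}, $|\mathbb E[\widehat C\mid\mathcal Z_n]|\le e_{X,n}e_{Y,n}$. Substituting the hypothesized bounds gives
\[
\left|\mathbb E[\widehat C\mid \mathcal Z_n]\right|\lesssim C_XC_Y\,J_XJ_Y\,(K_XK_Y)^{-\alpha}\eqqcolon B(J_X,J_Y,K_X,K_Y).
\]
This bound depends on the complexity parameters only through the product $K_XK_Y$. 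That is why the natural measure of joint second-stage complexity is the product $K_XK_Y$ rather than either factor alone.

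Next, we apply the first stage. By Assumption~\ref{assump:complexity_J}, we have $J_q^\star\coloneqq\mathcal J(\eta_{q,m_q^\star})\le\rho_q\,\mathcal J(\eta_{q,0})=\rho_qJ_q$ for $q\in\{X,Y\}$. The constants $C_X,C_Y$ do not depend on the target; this is what Assumption~\ref{assump:standard_bound} asserts when it says $C$ is independent of $K$ and of the target-specific quantity $\mathcal J$. Hence at fixed $K_X,K_Y$,
\[
B(J_X^\star,J_Y^\star,K_X,K_Y)\le\rho_X\rho_Y\,B(J_X,J_Y,K_X,K_Y),
\]
which is the first claim.

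For the equivalent formulation, fix a tolerance $\tau>0$ and solve $C_XC_YJ_XJ_Y(K_XK_Y)^{-\alpha}\le\tau$ for the product. This gives the sufficient value $P_0\coloneqq(C_XC_YJ_XJ_Y/\tau)^{1/\alpha}$ without the first stage, and $P^\star\coloneqq(C_XC_YJ_X^\star J_Y^\star/\tau)^{1/\alpha}\le(\rho_X\rho_Y)^{1/\alpha}P_0$ with it. Monotonicity of $t\mapsto t^{1/\alpha}$ for $\alpha>0$ justifies this inequality, exactly as in the single-sided argument of Lemma~\ref{lem:complexity_reduction}. Setting $\rho_X=\rho_Y=\rho$ then yields the factor $\rho^{2/\alpha}$.

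The main obstacle is interpretive rather than computational. The statement uses $\lesssim$ bounds on $e_{X,n}$ and $e_{Y,n}$ and speaks of ``sufficient'' complexity. The reduction must therefore be phrased as a comparison of the sufficient values implied by the upper bounds, not of the true minimal $K$. We also need the implicit constants in $\lesssim$ to be target-independent, and we will state this explicitly. We will also take the hypotheses on $e_{q,n}$ to hold with the same $\alpha$ and constants for both the residualized targets $\eta_{q,m_q^\star}$ and the raw targets $\eta_{q,0}$. If a common exponent $\alpha$ were not available on both sides, we would instead state the result for per-side exponents $\alpha_X,\alpha_Y$. In that case the cleaner product formulation breaks down and only the per-side factors $\rho_q^{1/\alpha_q}$ from Lemma~\ref{lem:complexity_reduction} survive.
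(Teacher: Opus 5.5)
Your proposal is correct and follows the paper's proof essentially step for step. Like the paper, you multiply the two side bounds to get $C_XC_YJ_XJ_Y(K_XK_Y)^{-\alpha}$, replace $J_q$ by $\rho_qJ_q$ at fixed $K_X,K_Y$, and then invert for $K_XK_Y$ to obtain the factor $(\rho_X\rho_Y)^{1/\alpha}$. Your explicit appeal to Lemma~\ref{lem:double} and your caveat that $C_X,C_Y$ must not change under first-stage residualization spell out what the paper leaves implicit. Note, though, that Assumption~\ref{assump:standard_bound} literally states only independence from $K$, so the target-independence of these constants is an added hypothesis rather than something the assumption already asserts.
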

\noindent Thus the covariance statistic enhances the benefit of reducing the
second-stage complexity constant; a one-sided reduction factor
\(\rho^{1/\alpha}\) is applied on both sides, yielding a squared reduction in
the sufficient product complexity when the two sides have the same reduction
factor.

\subsection{Polynomial--Tree Pairing}

BLITZ instantiates the two-stage strategy specifically by using low-order polynomial
regression for the first-stage residualization and trees for the second-stage
residualization. This polynomial--tree pairing is especially attractive because
the two classes have complementary approximation strengths while remaining
computationally lightweight. Low-order polynomial regression is fast, stable,
and well suited to removing smooth global structure in the conditional mean,
including linear trends, quadratic curvature, and low-order interactions.
Trees, by contrast, approximate functions through recursive partitions and are
therefore naturally suited to localized, threshold-like, and step-wise
structure. Using trees alone to approximate a smooth global surface can require
many splits, while using low-order polynomials alone can miss localized
nonlinearities or interaction effects. The two-stage construction exploits the
strengths of both classes: the polynomial stage cheaply removes smooth
large-scale dependence on \(\bm Z \), and the tree stage spends its adaptivity on the
remaining residual structure.

The following result holds for trees approximating smooth H\"older functions:

\begin{lemma}[Piecewise-constant approximation of H\"older functions]\label{lem:trees}
Let \( s \geq 1\), let \(\mathcal Z\subset\mathbb R^s\) be a bounded rectangle, and let
\(h:\mathcal Z\to\mathbb R\) be \(\beta\)-H\"older, \(0<\beta\le 1\), with
seminorm
\[
[h]_{\beta,\mathcal Z}
\coloneqq
\sup_{\substack{z,z'\in\mathcal Z\\ z\neq z'}}
\frac{|h(z)-h(z')|}{\|z-z'\|^\beta}.
\]
Let \(\mathcal T_K\) contain all axis-aligned piecewise-constant functions with
at most \(K\) rectangular cells. Then, for all sufficiently large \(K\),
\[
\inf_{t\in\mathcal T_K}\|h-t\|_{L_2(\mathbb P_{\bm Z})}
\le
C_{\mathcal Z,s,\beta}[h]_{\beta,\mathcal Z}K^{-\beta/s},
\]
where \(C_{\mathcal Z,s,\beta}<\infty\) depends on the domain, dimension,
and smoothness exponent.
\end{lemma}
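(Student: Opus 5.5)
The plan is to construct an explicit element of \(\mathcal T_K\) from a uniform grid partition of the rectangle and bound the error pointwise, which then bounds the \(L_2(\mathbb P_{\bm Z})\) norm for any probability measure supported on \(\mathcal Z\). Write \(\mathcal Z=\prod_{j=1}^s[a_j,b_j]\) and let \(L=\max_j(b_j-a_j)\). Given \(K\ge 1\), set \(m\coloneqq\lfloor K^{1/s}\rfloor\ge1\). Split each coordinate interval into \(m\) equal half-open subintervals, closing the last one. This yields \(m^s\le K\) axis-aligned rectangular cells \(Q_1,\ldots,Q_{m^s}\) that partition \(\mathcal Z\). Each cell has side lengths at most \(L/m\), so its Euclidean diameter is at most \(\sqrt s\,L/m\).

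Next, define \(t\coloneqq\sum_k h(z_k)\mathbbm 1_{Q_k}\), where \(z_k\) is any point of \(Q_k\), for instance its center. Then \(t\in\mathcal T_K\). For \(z\in Q_k\), the Hölder condition gives
\[
|h(z)-t(z)|=|h(z)-h(z_k)|\le [h]_{\beta,\mathcal Z}\|z-z_k\|^\beta\le [h]_{\beta,\mathcal Z}(\sqrt s\,L/m)^\beta .
\]
Hence \(\|h-t\|_{L_2(\mathbb P_{\bm Z})}\le\|h-t\|_\infty\le [h]_{\beta,\mathcal Z}(\sqrt sL)^\beta m^{-\beta}\). This step needs only that \(\mathbb P_{\bm Z}\) is a probability measure with support in \(\mathcal Z\). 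If \(\mathbb P_{\bm Z}\) charged points outside \(\mathcal Z\), the statement would not make sense, so I will state this implicit assumption explicitly.

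The remaining step converts \(m^{-\beta}\) into \(K^{-\beta/s}\). Since \(m=\lfloor K^{1/s}\rfloor\ge K^{1/s}/2\) whenever \(K\ge1\), we get \(m^{-\beta}\le 2^\beta K^{-\beta/s}\). The constant can therefore be taken as \(C_{\mathcal Z,s,\beta}=2^\beta(\sqrt s\,L)^\beta\). This bound in fact holds for every \(K\ge1\), which is stronger than the "sufficiently large \(K\)" in the statement. If one prefers the sharper constant \((1+\epsilon)^\beta(\sqrt sL)^\beta\), the sufficiently-large-\(K\) qualifier is exactly what absorbs the floor.

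No step is genuinely hard; the argument is elementary. The only points that need care are the bookkeeping of the floor when \(K\) is not a perfect \(s\)-th power and making sure that the cells form a genuine partition, which is why some boundary faces are closed and others half-open. A minor subtlety is that \(\mathcal T_K\) as defined does not require the cells to come from recursive binary splits. If one insisted on tree-realizable partitions, I would note that a product grid is realizable by recursive axis-aligned splits (split fully along coordinate 1, then within each slab along coordinate 2, and so on), so the bound transfers unchanged to tree-structured classes with \(K\) leaves.
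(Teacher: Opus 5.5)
Your proposal is correct and takes essentially the same route as the paper's proof: a uniform grid with $\lfloor K^{1/s}\rfloor$ subintervals per coordinate, a representative-point step function, the pointwise H\"older bound via the cell diameter $\sqrt s\,L/M$, and $\|\cdot\|_{L_2(\mathbb P_{\bm Z})}\le\|\cdot\|_\infty$. Your explicit constant $2^\beta(\sqrt s\,L)^\beta$, obtained from $\lfloor x\rfloor\ge x/2$ for $x\ge 1$, along with your remarks on half-open cells and on $\mathbb P_{\bm Z}$ being supported in $\mathcal Z$, makes precise details the paper leaves implicit.
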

\noindent Thus, for \(K\)-leaf trees approximating \(\beta\)-H\"older functions on an
\(s\)-dimensional conditioning space, the approximation bound takes the form
\[
\Delta_{\mathcal T_K}(h)
\le
C [h]_{\beta}K^{-\beta/s},
\]
similar to Equation~\eqref{eq:standard_bound}, where
\(\alpha=\beta/s\) and \(\mathcal J(h)=[h]_{\beta}\). We have dropped the
subscripts of \(C\) and \([h]_{\beta}\) for ease of presentation.

The following theorem quantifies the leaf-budget reduction obtained by pairing polynomial preprocessing with tree-based second-stage residualization:
\begin{theorem}[Polynomial--tree leaf-budget reduction]\label{thm:poly_tree}
Let \(\eta\) be a second-stage target and let \(h=\eta-p\) be the target left
after polynomial preprocessing. Suppose
\[
[h]_{\beta}\le \rho[\eta]_{\beta},
\qquad 0<\rho<1.
\]
Then the sufficient one-sided tree leaf budget needed to reach the same
approximation tolerance is reduced by
\(
\rho^{s/\beta}.
\)
If the same reduction factor \(\rho\) holds on both sides of the covariance
statistic, then the
sufficient product leaf budget \(K_XK_Y\) is reduced by
\(
\rho^{2s/\beta}.
\)
\end{theorem}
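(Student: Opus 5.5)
The plan is to reduce the statement to Lemma~\ref{lem:complexity_reduction} and Proposition~\ref{prop:product_complexity}, with Lemma~\ref{lem:trees} supplying the instance of Assumption~\ref{assump:standard_bound}. Concretely, Lemma~\ref{lem:trees} gives, for all sufficiently large \(K\),
\[
\Delta_{\mathcal T_K}(g)\le C[g]_{\beta}K^{-\beta/s}
\]
for any \(\beta\)-H\"older target \(g\). This is exactly Assumption~\ref{assump:standard_bound} with \(\mathcal A_K=\mathcal T_K\), \(\alpha=\beta/s\), and \(\mathcal J(g)=[g]_\beta\). Applying it to both \(\eta\) and \(h=\eta-p\) requires that \(h\) be \(\beta\)-H\"older on the bounded rectangle. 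This holds because \([h]_\beta\le\rho[\eta]_\beta<\infty\) by hypothesis; alternatively, polynomials are Lipschitz on bounded sets and hence \(\beta\)-H\"older there. The hypothesis \([h]_\beta\le\rho[\eta]_\beta\) is then precisely Assumption~\ref{assump:complexity_J} with \(m^\star\) the polynomial residualizer.

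For the one-sided claim, I would make the computation in Lemma~\ref{lem:complexity_reduction} explicit. Fix a tolerance \(\tau>0\). A sufficient budget for \(\eta\) is obtained by solving \(C[\eta]_\beta K^{-\beta/s}\le\tau\), which gives
\[
K_0(\tau)=\bigl(C[\eta]_\beta/\tau\bigr)^{s/\beta}.
\]
The same calculation for \(h\) gives \(K_1(\tau)=(C[h]_\beta/\tau)^{s/\beta}\le \rho^{s/\beta}K_0(\tau)\). Since \(\rho^{1/\alpha}=\rho^{s/\beta}\), this is the stated reduction. I would note that the comparison is between sufficient budgets derived from the upper bounds, not between minimal achievable leaf counts, as in the paper's framing.

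For the product claim, I would apply the same bound on each side, giving \(e_{q,n}\lesssim C[h_q]_\beta K_q^{-\beta/s}\) for \(q\in\{X,Y\}\). By Lemma~\ref{lem:double}, the covariance bias is bounded by the product
\[
C^2[h_X]_\beta[h_Y]_\beta (K_XK_Y)^{-\beta/s}.
\]
Setting this bound equal to a tolerance \(\tau\) and solving for \(K_XK_Y\) gives a sufficient product budget proportional to \(([h_X]_\beta[h_Y]_\beta)^{s/\beta}\). This is at most \((\rho_X\rho_Y)^{s/\beta}\) times the budget without preprocessing, as in Proposition~\ref{prop:product_complexity}. Setting \(\rho_X=\rho_Y=\rho\) yields \(\rho^{2s/\beta}\).

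The main subtlety is that Lemma~\ref{lem:trees} holds only for ``sufficiently large \(K\)'' and that the constant must be the same for \(\eta\) and \(h\). I would handle this by observing that \(C_{\mathcal Z,s,\beta}\) depends only on the domain, dimension, and \(\beta\), and not on the target. The tolerance should therefore be taken small enough that both sufficient budgets fall in the valid range, which is the asymptotic regime in which the statement is meant. A second point to state explicitly is that the second-stage error \(e_{q,n}\) is identified with the approximation bound, which is the standing hypothesis of Proposition~\ref{prop:product_complexity}.
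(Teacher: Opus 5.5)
Your proposal is correct and follows essentially the same route as the paper: you instantiate Assumption~\ref{assump:standard_bound} via Lemma~\ref{lem:trees} with \(\alpha=\beta/s\) and \(\mathcal J=[\cdot]_\beta\), then invoke Lemma~\ref{lem:complexity_reduction} for the one-sided factor \(\rho^{s/\beta}\) and Proposition~\ref{prop:product_complexity} for the product factor \(\rho^{2s/\beta}\). Your explicit formulas for \(K_0(\tau)\) and \(K_1(\tau)\), and your remarks on the target-independent constant and the large-\(K\) regime, spell out what the paper's three-line proof leaves implicit.
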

\noindent Thus, with the covariance product structure, polynomial preprocessing helps in
two related ways. First, for each side separately, reducing the effective
H\"older seminorm by a factor \(\rho\) reduces the sufficient leaf budget by
\(\rho^{s/\beta}\). Second, because the covariance-bias bound is proportional
to \(e_{X,n}e_{Y,n}\), applying the same reduction on both sides reduces the
covariance-bias constant by \(\rho^2\); equivalently, the sufficient product
leaf budget \(K_XK_Y\) is reduced by \(\rho^{2s/\beta}\). For a root-\(n\)
covariance statistic, the relevant asymptotic requirement is
\(e_{X,n}e_{Y,n}\in o_p(n^{-1/2})\), which in the balanced case is achieved by
\(e_{X,n}\in o_p(n^{-1/4})\) and \(e_{Y,n}\in o_p(n^{-1/4})\).

In practice, these bounds can translate into \textit{large} reductions in the sufficient value of \(K\), often by orders of magnitude. A modest reduction in the second-stage complexity constant can produce a many-fold reduction in the sufficient tree size, and the effect is amplified in the covariance statistic because the two side errors enter multiplicatively. This makes it easier to control conditional mean bias under the null, so the covariance statistic can behave in practice as if the residuals were centered given \(\bm Z \). The following examples illustrate the scale of this reduction:

\textbf{Example 1: linear removal.}
Let \(Z\sim\mathrm{Unif}[-1,1]\) and
\(
\eta(z)=100z+\cos(\pi z).
\)
The population linear regression of \(\eta\) on \(\operatorname{span}\{1,z\}\)
removes the linear term, because \(\cos(\pi z)\) is orthogonal to both
\(1\) and \(\bm Z \) on \([-1,1]\). Thus the polynomial stage gives
\(p^\star(z)=100z\), and the tree stage sees
\[
h(z)=\eta(z)-p^\star(z)=\cos(\pi z).
\]
For \(\beta=1\), \([\eta]_1=\sup_{z\in[-1,1]}|100-\pi\sin(\pi z)|=100+\pi\) by the mean value theorem,
whereas \([h]_1=\sup_{z\in[-1,1]}\pi|\sin(\pi z)|=\pi\). Hence
\[
\rho=\frac{[h]_1}{[\eta]_1}=\frac{\pi}{100+\pi}.
\]
Since \(s=1\) and \(\beta=1\), the sufficient per-side leaf budget is reduced
by \(\rho^{s/\beta}=\pi/(100+\pi)\); equivalently, polynomial preprocessing reduces the sufficient per-side leaf budget by a factor of approximately \(33\). If the same reduction occurs on both
sides, the covariance-bias constant is reduced by
\[
\rho^2=\left(\frac{\pi}{100+\pi}\right)^2.
\]
In other words, the bound is smaller by a factor of approximately \(1078\).

\textbf{Example 2: quadratic removal.}
Let \(Z\sim\mathrm{Unif}[-1,1]\) and
\[
\eta(z)=100z^2+\left(z^3-\frac35z\right).
\]
The term \(z^3-\frac35z\) is orthogonal to \(\operatorname{span}\{1,z,z^2\}\)
on \([-1,1]\). Therefore the population quadratic regression removes the
quadratic component exactly, giving \(p^\star(z)=100z^2\), and the tree stage
sees
\[
h(z)=z^3-\frac35z.
\]
For \(\beta=1\),
\[
[\eta]_1
=
\sup_{z\in[-1,1]}\left|200z+3z^2-\frac35\right|
=
\frac{1012}{5},
\]
while
\[
[h]_1
=
\sup_{z\in[-1,1]}\left|3z^2-\frac35\right|
=
\frac{12}{5}.
\]
Hence
\[
\rho=\frac{[h]_1}{[\eta]_1}
=
\frac{12}{1012}
=
\frac{3}{253}.
\]
Again \(s=1\) and \(\beta=1\), so the sufficient per-side leaf budget is
reduced by \(3/253\), or a factor of approximately 84. If both sides obtain this reduction, the covariance-bias
constant is reduced by
\(
\rho^2=\left(\frac{3}{253}\right)^2,
\) so that the bound is reduced by a factor of approximately \(7112\).

\vspace{3mm}
We conclude that first-stage polynomial residualization can substantially reduce the tree size needed to control residual cross-covariance bias. This supports practical Type I error control while preserving power, because the second-stage trees can remain small enough to avoid excessive overfitting. 

\subsection{Asymptotic Distribution}

We now derive the asymptotic distribution of the test statistic \(S\). Let
\(\widehat{\bm U}_{X,i}\in\mathbb R^p\) and
\(\widehat{\bm U}_{Y,i}\in\mathbb R^q\) denote the final estimated residualized
feature vectors produced by Algorithm~\ref{alg:blitz}, and define
\(
\widehat{\bm W}_i
\coloneqq
\operatorname{vec}\!\left(
\widehat{\bm U}_{X,i}\widehat{\bm U}_{Y,i}^{\top}
\right)
\in\mathbb R^{pq}.
\)
Also define
\[
e_{X,n}
=
\left\{
\frac1n\sum_{i=1}^{n}
\left\|
\mathbb E(\widehat{\bm U}_{X,i}\mid \mathcal Z_n)
\right\|^2
\right\}^{1/2},
\qquad
e_{Y,n}
=
\left\{
\frac1n\sum_{i=1}^{n}
\left\|
\mathbb E(\widehat{\bm U}_{Y,i}\mid \mathcal Z_n)
\right\|^2
\right\}^{1/2}.
\]
These quantities are the vector-valued analogues of the scalar residual
conditional-mean errors in Lemma~\ref{lem:double}. The following result holds:

\begin{theorem}[Weighted chi-square null law for the BLITZ statistic]
\label{thm:weighted_chisq}
Assume i.i.d. observations and \(H_0:X\perp\!\!\!\perp Y\mid \bm Z\). 
Assume that the vector-valued residual conditional-mean errors satisfy
\(e_{X,n}e_{Y,n}\in o_p(n^{-1/2})\).
Further suppose that the centered estimated product features satisfy
\[
\frac{1}{\sqrt n}\sum_{i=1}^{n}
\left[
\widehat{\bm W}_i-\mathbb{E}\!\left(\widehat{\bm W}_i\mid \mathcal Z_n\right)
\right]
\xrightarrow{d}
N(0,\Omega),
\]
where \(\Omega\in\mathbb R^{pq\times pq}\) is finite and positive
semidefinite. Then
\[
S
=
n\|\widehat C\|_F^2
=
\left\|
\frac{1}{\sqrt n}\sum_{i=1}^{n}\widehat{\bm W}_i
\right\|^2
\xrightarrow{d}
\sum_{r=1}^{pq}\lambda_r\chi_{1,r}^{2},
\]
where \(\lambda_1,\ldots,\lambda_{pq}\) are the eigenvalues of \(\Omega\), and
\(\chi_{1,1}^{2},\ldots,\chi_{1,pq}^{2}\) are independent chi-square random
variables with one degree of freedom.
\end{theorem}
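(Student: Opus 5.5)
The plan is to split the normalized sum into a conditionally centered fluctuation term and a conditional-bias term. Define
\[
\bm T_n \coloneqq \frac{1}{\sqrt n}\sum_{i=1}^{n}\bigl[\widehat{\bm W}_i-\mathbb E(\widehat{\bm W}_i\mid\mathcal Z_n)\bigr],
\qquad
\bm B_n \coloneqq \frac{1}{\sqrt n}\sum_{i=1}^{n}\mathbb E(\widehat{\bm W}_i\mid\mathcal Z_n).
\]
Because \(\operatorname{vec}\) is an isometry from the Frobenius norm to the Euclidean norm, and \(\sqrt n\,\operatorname{vec}(\widehat C)=n^{-1/2}\sum_i\widehat{\bm W}_i\), we have \(S=\|\bm T_n+\bm B_n\|^2\); this is the displayed identity. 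By hypothesis \(\bm T_n\xrightarrow{d}N(0,\Omega)\). It then suffices to show \(\bm B_n\to 0\) in probability. Slutsky's lemma gives \(\bm T_n+\bm B_n\xrightarrow{d}N(0,\Omega)\), and the continuous mapping theorem with \(x\mapsto\|x\|^2\) gives \(S\xrightarrow{d}\|\bm G\|^2\) with \(\bm G\sim N(0,\Omega)\). Finally, spectrally decompose \(\Omega=Q\Lambda Q^\top\). Then \(Q^\top\bm G\) has independent \(N(0,\lambda_r)\) coordinates, so \(\|\bm G\|^2=\sum_r\lambda_r\chi^2_{1,r}\); zero eigenvalues contribute nothing.

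The main work is the bias step \(\bm B_n=o_p(1)\), the vector analogue of Lemma~\ref{lem:double}. Each coordinate of \(\bm B_n\) has the form \(n^{-1/2}\sum_i\mathbb E(\widehat U_{X,i,a}\widehat U_{Y,i,b}\mid\mathcal Z_n)\). The key claim is that under \(H_0\) this conditional expectation factorizes:
\[
\mathbb E(\widehat U_{X,i,a}\widehat U_{Y,i,b}\mid\mathcal Z_n)=\mathbb E(\widehat U_{X,i,a}\mid\mathcal Z_n)\,\mathbb E(\widehat U_{Y,i,b}\mid\mathcal Z_n).
\]
This is the same conditional-independence mechanism that underlies Lemma~\ref{lem:double}, and I will import it from there. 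The \(X\)-side residuals are functions of \((X_1,\dots,X_n,\mathcal Z_n)\), and the \(Y\)-side residuals are functions of \((Y_1,\dots,Y_n,\mathcal Z_n)\). Under i.i.d. sampling and \(H_0\), the vectors \((X_1,\dots,X_n)\) and \((Y_1,\dots,Y_n)\) are conditionally independent given \(\mathcal Z_n\), so the factorization follows.

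Summing over coordinates \((a,b)\) and applying Cauchy--Schwarz twice (once over \(i\), once inside the Frobenius norm) gives
\[
\|\bm B_n\|\le n^{-1/2}\sum_i\|\mathbb E(\widehat{\bm U}_{X,i}\mid\mathcal Z_n)\|\,\|\mathbb E(\widehat{\bm U}_{Y,i}\mid\mathcal Z_n)\|\le \sqrt n\,e_{X,n}e_{Y,n}.
\]
The right-hand side is \(o_p(1)\) by assumption.

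I expect the factorization to be the main obstacle. The first-stage polynomial fit and the standardization pool \(X\) and \(Y\) information only within each side, so they respect the separation. However, any tuning or out-of-bag selection that shares randomness across sides must be either independent of the data or a function of \(\mathcal Z_n\) only. I will state this as the same structural condition implicitly used in Lemma~\ref{lem:double}, namely that each side's residual is measurable with respect to its own responses and \(\mathcal Z_n\) (plus independent auxiliary randomness). With that condition in place, the remaining steps are routine.
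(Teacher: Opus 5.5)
Your proposal is correct and matches the paper's proof essentially step for step: split \(n^{-1/2}\sum_i\widehat{\bm W}_i\) into a centered term and a conditional-mean term; factor \(\mathbb E(\widehat{\bm W}_i\mid\mathcal Z_n)=\operatorname{vec}(\widehat\mu_{X,i}\widehat\mu_{Y,i}^\top)\) using \(X_{1:n}\perp\!\!\!\perp Y_{1:n}\mid\mathcal Z_n\) and the side-specific construction; bound the bias by \(\sqrt n\,e_{X,n}e_{Y,n}\) via Cauchy--Schwarz; then apply Slutsky, continuous mapping, and the spectral decomposition of \(\Omega\). One correction to your caveat: randomness shared by both sides breaks the factorization given \(\mathcal Z_n\) alone even when it is independent of the data (for example, a common Rademacher sign \(\xi\) gives \(\mathbb E(\xi a\cdot\xi b\mid\mathcal Z_n)=ab\) while both conditional means vanish), so such randomness must be side-specific and mutually independent, or else included in the conditioning \(\sigma\)-field used to define \(e_{X,n}\), \(e_{Y,n}\) and the centering.
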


\hspace{4.5mm} We interpret Theorem~\ref{thm:weighted_chisq} as a high-level null
approximation result. Rather than requiring either residualizer to be perfectly
consistent, it requires only that the product of the two residual
conditional-mean errors be \(o_p(n^{-1/2})\). This condition is plausible in
the low- and moderate-dimensional regimes targeted by BLITZ because the
polynomial stage removes broad smooth dependence on \(\bm Z\), while the
shallow tree stage approximates the lower-complexity residual conditional
means. For H\"older-smooth second-stage targets, the tree approximation error
scales as \(K^{-\beta/s}\), and polynomial preprocessing can drastically reduce the
effective H\"older seminorm faced by the tree residualizer. Thus, in regimes
where \(s=\dim(\bm Z)\) is modest and the remaining conditional-mean structure
is smooth or locally threshold-like, the residual conditional-mean product can
be small enough for the centered product-feature CLT to dominate the statistic.
The result is not intended as an assumption-free guarantee; in high-dimensional,
highly irregular, or severely misspecified settings, the residualization
condition may fail.

Under the stated conditions, the limiting null distribution in
Theorem~\ref{thm:weighted_chisq} is a weighted sum of independent chi-square
variables. In finite samples, BLITZ estimates this distribution by replacing
the population eigenvalues \(\lambda_r\) with the empirical eigenvalues obtained
from the product-feature covariance matrix. The resulting weighted chi-square
law is then approximated using a moment-matched distribution. By default, we
use the four-moment Lindsay--Pilla--Basak approximation
\cite{Lindsay00}, and if this approximation fails numerically, we fall back to
the Hall--Buckley--Eagleson approximation \cite{Hall83,Buckley88}.

\section{Empirical Results}

We now test whether the theoretical finite-sample gains appear empirically and whether they translate into improved causal discovery performance.

\subsection{Hyperparameters \& Comparators}

We used the following default BLITZ hyperparameters throughout. The first-stage residualizer was degree-two polynomial regression including all interactions. The second-stage residualizer fit one shallow regression tree for each transformed feature on each side of the test. The minimum node size was selected separately for the \(X\)- and \(Y\)-sides by out-of-bag prediction over the grid \(\{5,10,15,20,40,80\}\).

We compared BLITZ with these default hyperparameters against the following fast CI testing methods, each run with its default hyperparameters:

\begin{enumerate}
\item \textbf{RCIT} \cite{Strobl19}: a random-Fourier-feature approximation to KCIT that replaces expensive kernel operations with finite-dimensional randomized features.
\item \textbf{RCoT} \cite{Strobl19}: a closely related random-feature relaxation of KCIT based on a randomized conditional-correlation statistic, typically recommended over RCIT due to improved Type I error rate control and power.
\item \textbf{FCIT} \cite{Chalupka18}: a prediction-based CI test that rejects the null when adding the candidate variable to the conditioning set significantly improves out-of-sample prediction.
\item \textbf{CCI} \cite{Ramsey14}: a regression-based CI test that first removes the effect of $\bm Z$ from $X$ and $Y$, then checks whether the resulting residuals still exhibit nonlinear dependence by testing a truncated collection of basis-function correlations $\operatorname{cov}(f(R_X), g(R_Y))$.
\item \textbf{FastKCI} \cite{Schacht25}: a partitioned kernel CI test that clusters the conditioning space with a mixture-of-experts model, runs local kernel CI tests in parallel on the blocks, and then aggregates them with importance weights.
\end{enumerate}
Finally, we performed \textbf{ablation analyses} by comparing BLITZ with two variants that remove either the first-stage polynomial residualizer or the second-stage tree residualizer.

\subsection{Type I Error}

We first assessed Type I error calibration and runtime under a post-nonlinear null model across sample sizes
\(n \in \{500, 1000, 2000, 5000, 10000\}\)
and conditioning dimensions
\(s \in \{0,1,2,3,4,5\}\).
For each simulation, we generated data in which \(X\) and \(Y\) were conditionally independent given shared Gaussian covariates \(\bm Z \). Specifically, we sampled \(Z \in \mathbb{R}^{n \times s}\) with independent standard Gaussian entries and generated
\[
X = f_X(Zw_X + \omega_X E_X), \qquad
Y = f_Y(Zw_Y + \omega_Y E_Y),
\]
where \(E_X,E_Y \in \mathbb{R}^n\) are independent standard Gaussian noise vectors,
\(w_X,w_Y \sim \mathrm{Unif}(0.5,1.5)^s\) are independently sampled covariate-weight vectors, and
\(\omega_X,\omega_Y \sim \mathrm{Unif}(0,1)\) are independently sampled noise weights.
We independently selected \(f_X\) and \(f_Y\) from five transformations: the identity map, square, cube, radial exponential decay \(\exp(-|x|)\), and hyperbolic tangent \(\tanh(x)\).
For each generated dataset, we applied every method and recorded its \(p\)-value and elapsed runtime.
We repeated this procedure 1000 times for each combination of sample size and conditioning dimension. Because constraint-based causal discovery algorithms require many repeated CI queries, we imposed a three-minute wall-clock limit on each run of each CI test.

\begin{figure}[!b]
    \centering
    \includegraphics[width=0.875\linewidth]{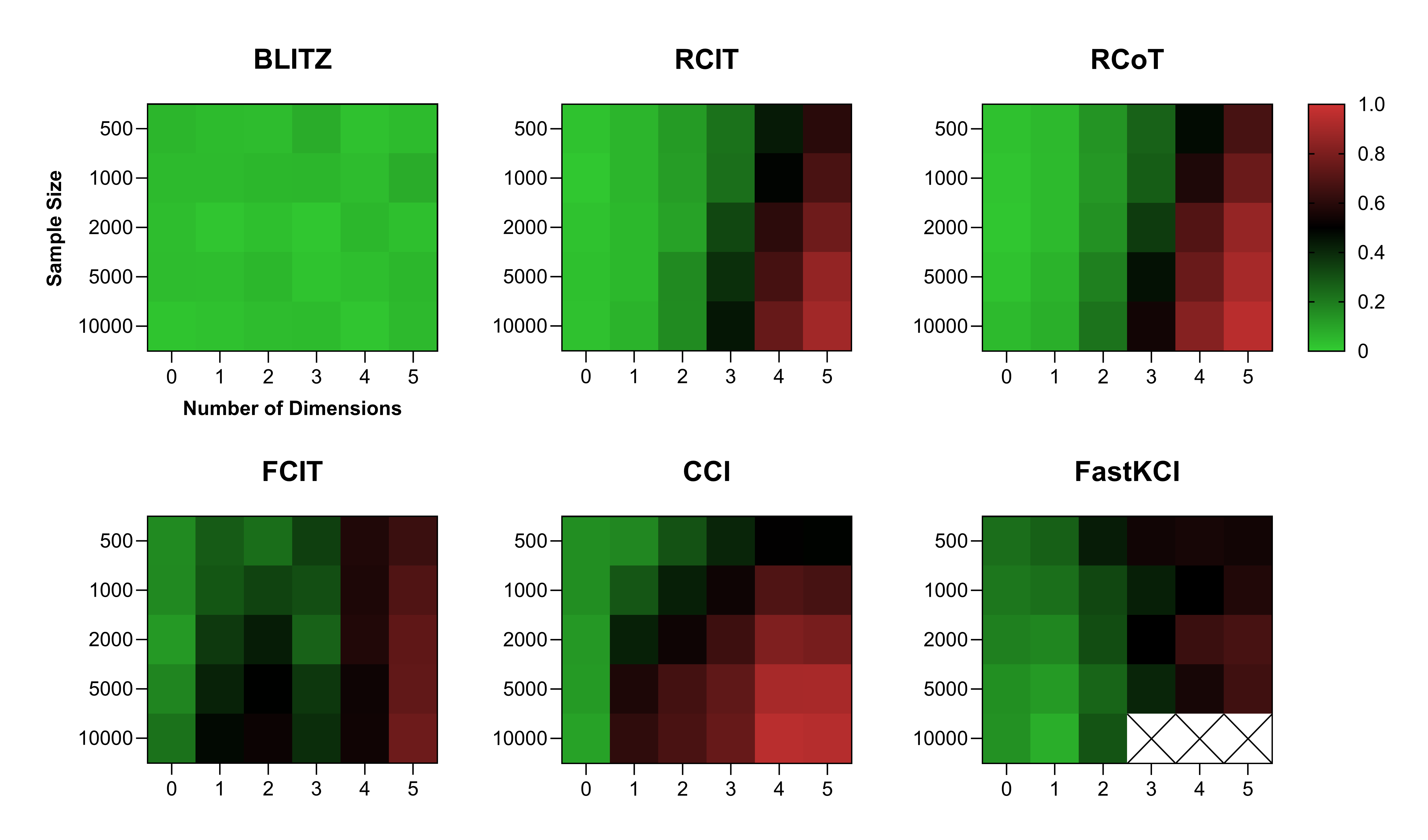}
    \caption{
    \textbf{Null calibration.}
    All heatmaps share the same x- and y-axis labels. Type I error calibration was measured by the Kolmogorov--Smirnov distance between each method's empirical \(p\)-value distribution and the \(\mathrm{Unif}(0,1)\) reference distribution. Lower values (greener) indicate better calibration. BLITZ achieved the smallest deviations from the uniform null distribution across the evaluated sample sizes and conditioning dimensions.
    }
    \label{fig:Type1:KS}
\end{figure}

\begin{figure}
    \centering
    \includegraphics[width=1\linewidth]{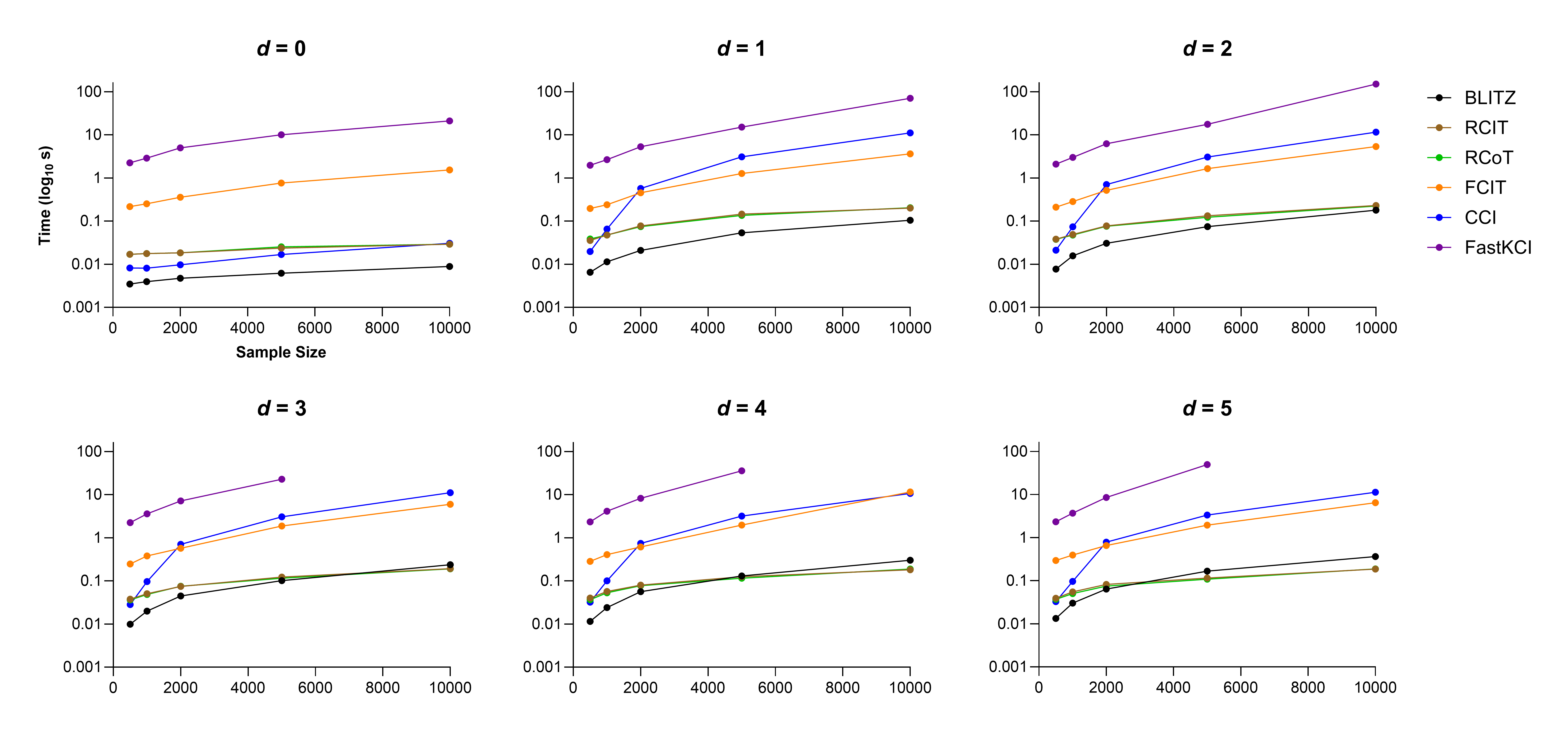}
    \caption{
    \textbf{Runtime under the null.}
    Mean wall-clock runtime for each CI test. BLITZ was the fastest method for \(s \in \{0,1,2\}\) and remained highly competitive at larger conditioning dimensions. Although RCIT and RCoT were faster in some higher-dimensional settings, their largest runtime advantage over BLITZ was less than 0.1 seconds. FastKCI was substantially slower and did not complete within the three-minute limit for \(n=10000\) and \(s \in \{3,4,5\}\).
    }
    \label{fig:Type1:time}
\end{figure}

We assessed calibration by comparing the empirical null distribution of each method's \(p\)-values with the \(\mathrm{Unif}(0,1)\) distribution using the \textbf{Kolmogorov--Smirnov statistic}.
Figure~\ref{fig:Type1:KS} summarizes these results.
Across the evaluated regimes, BLITZ produced the best-calibrated null \(p\)-values, achieving the smallest deviation from the uniform reference distribution.
RCIT and RCoT were the next strongest methods overall, although their calibration deteriorated at larger sample sizes and higher conditioning dimensions, where the residualization step must capture increasingly complex conditional structure.
FCIT also showed degraded calibration, with the deterioration driven primarily by increases in the conditioning dimension.
FastKCI exhibited a similar pattern to FCIT and, in addition, failed to complete within the three-minute limit when \(n=10000\) and \(s \in \{3,4,5\}\).
CCI performed poorly in this evaluation because its reported scores did not behave as calibrated \(p\)-values under the null. Ablation analyses revealed that removing the polynomial and tree steps led to anticonservativeness (Supplementary Figure \ref{fig:Type1:ablations}). In particular, the no-polynomial ablation was sometimes close to BLITZ under the ordinary KS statistic, but the positive KS statistic showed excess small \(p\)-values, indicating residual anti-conservativeness rather than harmless random fluctuation. We conclude that BLITZ maintained the best control of the Type I error rate.

We summarize runtime performance in Figure~\ref{fig:Type1:time}. The methods differed substantially in computational cost. BLITZ was the fastest method at lower conditioning dimensions, completing within 0.365 seconds across all conditions. At higher dimensions, RCIT and RCoT became competitive, although their maximum runtime advantage over BLITZ was only 0.099 seconds at \(n=10000\) and \(s=5\). In contrast, FastKCI was by far the slowest method, with BLITZ running approximately 122 to 2357 times faster across the evaluated regimes. These results indicate that BLITZ is not only well calibrated under the null but also among the fastest CI tests considered.

\subsection{Power}

We next assessed statistical power under alternatives to conditional independence. We used the same sample sizes, conditioning dimensions, nonlinear transformations, noise structure, and 1000 simulation replicates as in the Type I error experiment. For \(\dim(\bm Z)>0\), we generated dependence by first using unperturbed covariates to generate \(X\) and \(Y\), and then adding small Gaussian perturbations \(\mathcal{N}(0,1/16)\) to the observed conditioning variables supplied to the CI tests. Thus, \(X\) and \(Y\) shared information through the latent, unperturbed covariates, while the CI tests conditioned only on the perturbed version of \(\bm Z\). For the marginal case \(\dim(\bm Z)=0\), we instead generated \(X\) and \(Y\) from a shared unobserved one-dimensional Gaussian variable plus independent noise, and then supplied no conditioning variables to the CI tests. In practice, these constructions create moderately difficult alternatives: the signal is strong enough to detect, but not so strong that all methods saturate across the tested sample sizes and conditioning dimensions.

We evaluated power using the \textbf{area under the calibrated power curve (AUCPC)}. For each method, sample size, and conditioning dimension, we first used the null simulations to determine the \(p\)-value cutoff that achieved a target empirical Type I error rate \(\alpha\). We then applied this calibrated cutoff to the alternative simulations to estimate power at that \(\alpha\). Repeating this procedure over a grid of \(\alpha\) values and integrating the resulting calibrated power curve yielded AUCPC. This metric summarizes power across rejection thresholds while controlling for differences in null calibration, preventing anti-conservative methods from appearing artificially powerful simply because they produce overly small \(p\)-values under the null.

\begin{figure}[!t]
    \centering
    \includegraphics[width=0.875\linewidth]{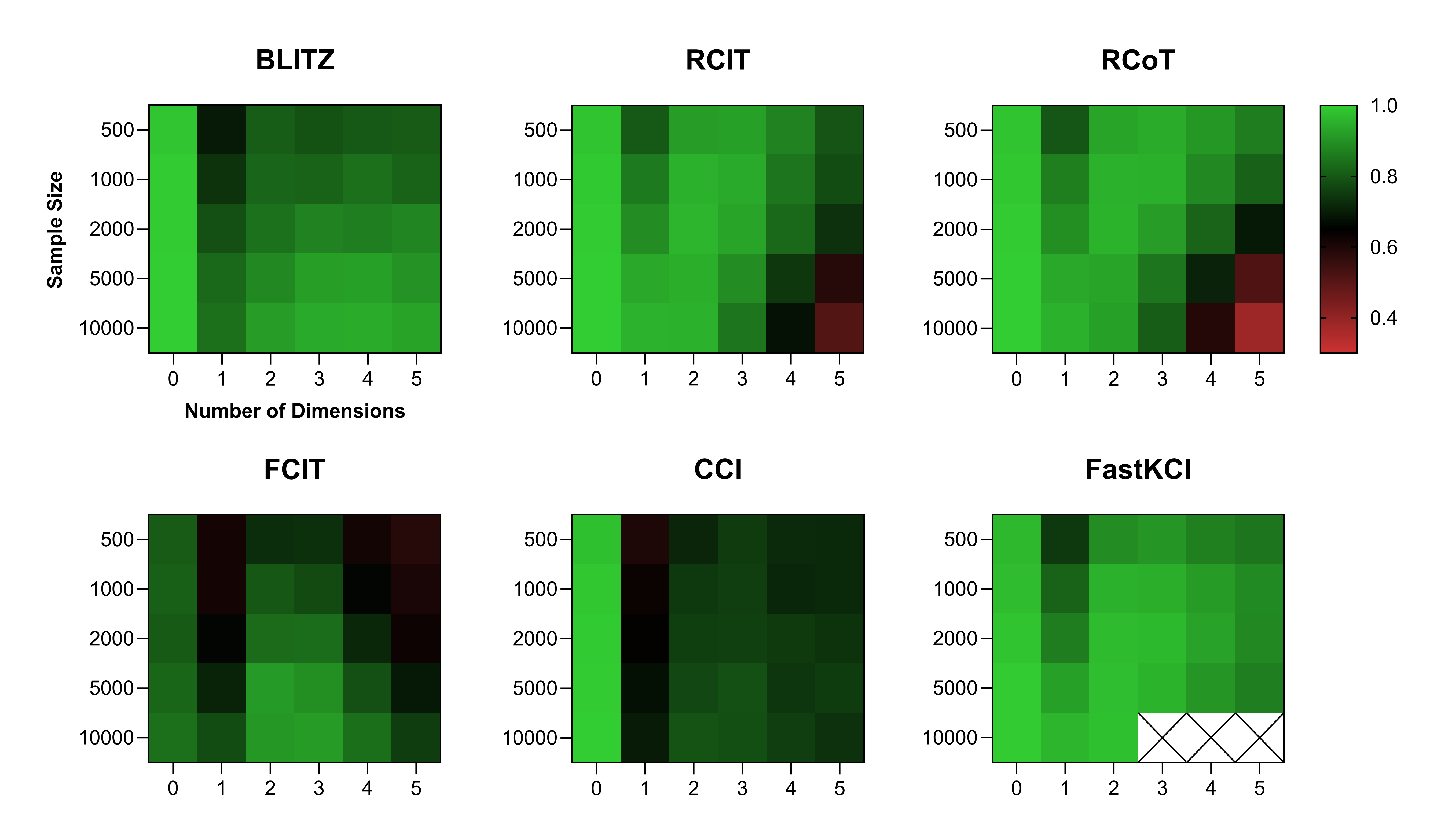}
    \caption{
    \textbf{Power across standalone CI benchmarks.} Heatmaps report AUCPC values across sample sizes and conditioning dimensions, with larger values (greener) indicating greater size-calibrated power. BLITZ retained competitive power across the full experimental grid, including the more difficult conditional settings, whereas several competing methods showed less stable gains with increasing sample size or did not complete in the largest regimes.}

    \label{fig:TypeTwo}
\end{figure}

Figure~\ref{fig:TypeTwo} summarizes the AUCPC results. All methods performed well in the non-conditional setting, where the task reduces to detecting marginal dependence. The conditional settings were substantially more challenging, but BLITZ achieved power close to that of the strongest competing methods. The ablation results further showed that the two-stage residualization strategy preserved power, with only modest loss relative to the strongest no-polynomial variant (Supplementary Figure~\ref{fig:Type2:ablations}).
BLITZ's AUCPC also followed the expected qualitative pattern of increasing with sample size, whereas RCIT, RCoT, and CCI showed less consistent gains across the grid. FastKCI performed well, but did not scale beyond \(n=10000\) and \(s=2\). Timing results again confirmed that BLITZ completed consistently well under one second (Supplementary Figure~\ref{fig:Type2:time}). Taken together with the Type I error results, these findings suggest that BLITZ offers the strongest overall balance between calibration, power, and speed; it maintains valid operating characteristics across the full experimental grid, retains competitive sensitivity to dependence under the alternative, and consistently completes well under one second.

\subsection{Causal Discovery}

Although BLITZ controlled the Type I error rate well and achieved competitive power in standalone CI testing, the key practical question is whether these improvements translate into better causal graph recovery. We therefore evaluated each CI test within three constraint-based causal discovery algorithms: PC, FCI, and RFCI. To do this, we first generated random DAGs with 15 total variables and expected observed degree 2. For the fully observed PC setting, all variables were observed. For the FCI and RFCI settings, each replicate included either zero or one latent variable and either zero or one selection variable. Latent variables were generated as unobserved root causes of at least two observed variables, while selection variables were generated as unobserved sinks with at least two observed parents.

We obtained 5000 samples from each graph in topological order. For each node, we formed a weighted sum of its parents plus Gaussian noise, with edge weights sampled uniformly in magnitude between 0.25 and 1 and assigned random signs. We then applied a randomly selected post-nonlinear transformation to each node: identity, square, cube, radial exponential decay, or hyperbolic tangent, and standardized the resulting variable. For selection-variable settings, we generated an oversized raw sample and retained observations with high selection-variable scores until the target sample size was reached. The algorithms were given only the observed variables. We repeated the above procedure 50 times.

Our primary endpoint-level metric was the \textbf{conditional endpoint F1 score}, or the F1 score for endpoint recovery conditional on the skeleton discovered by the algorithm. This metric evaluates orientation quality on the learned skeleton rather than over all possible node pairs: a well-calibrated CI test may remove more edges than an anti-conservative test, leading to lower adjacency recall, but the adjacencies that remain should be supported by more reliable separating sets and therefore yield more accurate endpoint orientations. This also reflects how applied scientists typically interpret causal discovery output: after inspecting the recovered graph, they reason about causal directionality among the adjacencies that the algorithm retained. We also report the corresponding \textbf{conditional endpoint accuracy} and \textbf{conditional endpoint MCC} (Matthew's correlation coefficient).

As a secondary diagnostic, we report the \textbf{minimum-oracle separating-set size match rate}. For each true nonedge, we compute the minimum separating-set size implied by the oracle graph, and then measure the proportion of discovered separating sets whose cardinality matches this oracle minimum. This evaluates whether each CI test recovers separating sets at the correct conditioning order, rather than merely whether it removes an edge. Finally, we report \textbf{runtime}. Other structural recovery metrics, including exact-endpoint F1/accuracy/MCC, and adjacency F1/accuracy/MCC, are reported in the Supplementary Materials.

We were unable to include FastKCI in the experiments because a single run required more than five hours, even when used with the PC algorithm and four cores. We also excluded CCI under FCI because individual runs exceeded seven hours. Table~\ref{table:causal_discovery} reports results for all remaining feasible combinations of CI tests and causal discovery algorithms. The first three row blocks correspond to PC, FCI, and RFCI, respectively, while the final block reports the mean within-algorithm rank across the three algorithms.

\begin{sidewaystable}
\centering
\begin{threeparttable}
\begin{tabular}{lcccccc}
\hline
Algorithm & Test & F1 & Accuracy & MCC & Sep-Size Match & Time (s) \\ \midrule

\multicolumn{1}{l}{\multirow{7}{*}{\hspace{-2mm}PC}} 
& BLITZ & 0.790 [0.742, 0.837] & 0.786 [0.738, 0.833] & 0.525 [0.420, 0.629] & 0.857 [0.829, 0.884] & 15.46 [11.41, 19.51] \\
\multicolumn{1}{c}{} 
& RCIT & 0.782 [0.736, 0.827] & 0.781 [0.735, 0.827] & 0.510 [0.404, 0.616] & 0.845 [0.816, 0.875] & 28.54 [21.65, 35.43] \\
\multicolumn{1}{c}{} 
& -poly & 0.780 [0.738, 0.823] & 0.778 [0.735, 0.821] & 0.520 [0.421, 0.618] & 0.857 [0.830, 0.885] & 15.36 [11.60, 19.13] \\
\multicolumn{1}{c}{} 
& RCoT & 0.770 [0.728, 0.812] & 0.769 [0.727, 0.810] & 0.478 [0.378, 0.579] & 0.840 [0.810, 0.870] & 28.10 [21.48, 34.73] \\
\multicolumn{1}{c}{} 
& CCI & 0.736 [0.697, 0.775] & 0.707 [0.666, 0.748] & 0.430 [0.352, 0.508] & 0.810 [0.782, 0.838] & 835.2 [595.2, 1075.] \\
\multicolumn{1}{c}{} 
& -tree & 0.735 [0.686, 0.784] & 0.698 [0.646, 0.749] & 0.444 [0.360, 0.529] & 0.839 [0.812, 0.867] & 3.730 [3.100, 4.350] \\
\multicolumn{1}{c}{} 
& FCIT & 0.729 [0.681, 0.777] & 0.717 [0.668, 0.766] & 0.392 [0.275, 0.510] & 0.758 [0.722, 0.794] & 148.9 [133.2, 164.5] \\ \hline

\multirow{6}{*}{FCI} 
& RCIT & 0.731 [0.676, 0.785] & 0.731 [0.676, 0.785] & 0.581 [0.502, 0.660] & 0.682 [0.614, 0.751] & 659.5 [280.9, 1038.] \\
& FCIT & 0.704 [0.640, 0.769] & 0.697 [0.633, 0.761] & 0.496 [0.362, 0.631] & 0.593 [0.524, 0.662] & 235.5 [154.9, 316.0] \\
& BLITZ & 0.703 [0.645, 0.760] & 0.701 [0.644, 0.759] & 0.550 [0.475, 0.625] & 0.702 [0.636, 0.768] & 709.3 [167.1, 1252.] \\
& RCoT & 0.684 [0.628, 0.740] & 0.684 [0.628, 0.740] & 0.502 [0.430, 0.575] & 0.686 [0.619, 0.753] & 784.8 [442.3, 1127.] \\
& -poly & 0.672 [0.610, 0.733] & 0.670 [0.609, 0.731] & 0.474 [0.391, 0.558] & 0.702 [0.635, 0.769] & 707.1 [254.0, 1160.] \\
& -tree & 0.583 [0.524, 0.641] & 0.564 [0.507, 0.621] & 0.367 [0.302, 0.433] & 0.690 [0.624, 0.756] & 220.1 [102.0, 338.2] \\ \hline

\multirow{7}{*}{RFCI} 
& FCIT & 0.698 [0.637, 0.759] & 0.693 [0.631, 0.755] & 0.437 [0.333, 0.540] & 0.592 [0.522, 0.661] & 144.3 [124.5, 164.0] \\
& BLITZ & 0.620 [0.559, 0.681] & 0.617 [0.556, 0.677] & 0.399 [0.321, 0.478] & 0.701 [0.635, 0.767] & 21.03 [15.59, 26.47] \\
& -poly & 0.607 [0.545, 0.668] & 0.603 [0.542, 0.665] & 0.413 [0.343, 0.483] & 0.702 [0.635, 0.769] & 20.83 [15.36, 26.30] \\
& RCoT & 0.575 [0.510, 0.640] & 0.574 [0.509, 0.638] & 0.377 [0.291, 0.462] & 0.689 [0.622, 0.756] & 51.59 [35.70, 67.48] \\
& RCIT & 0.573 [0.511, 0.634] & 0.572 [0.510, 0.633] & 0.377 [0.291, 0.463] & 0.687 [0.621, 0.753] & 47.68 [33.55, 61.81] \\
& -tree & 0.536 [0.477, 0.594] & 0.511 [0.454, 0.567] & 0.320 [0.259, 0.381] & 0.689 [0.623, 0.755] & 5.150 [3.970, 6.330] \\
& CCI & 0.535 [0.474, 0.596] & 0.523 [0.463, 0.584] & 0.323 [0.245, 0.400] & 0.686 [0.622, 0.750] & 1367. [824.1, 1910.] \\ \hline

\multirow{7}{*}{Avg Rank} 
& BLITZ & \textbf{2.00} & \textbf{1.67} & \textbf{2.00} & \underline{1.67} & 3.67 \\
& RCIT & \underline{2.67} & \underline{2.67} & \underline{2.83} & 4.33 & 4.00 \\
& FCIT & 3.33 & 3.00 & 4.00 & 6.67 & 4.67 \\
& -poly & 3.67 & 3.67 & 3.00 & \textbf{1.33} & \underline{2.67} \\
& RCoT & 4.00 & 4.00 & 3.83 & 3.83 & 5.00 \\
& -tree & 6.00 & 7.00 & 6.00 & 3.83 & \textbf{1.00} \\
& CCI & 6.00 & 6.00 & 6.00 & 6.50 & 7.00 \\ \hline

\end{tabular}
\caption{\textbf{Causal discovery results.} The first three row blocks correspond
to PC, FCI, and RFCI, while the final block reports the mean within-algorithm
rank. Entries are means across simulation replicates with 95\% confidence
intervals. Lower ranks indicate better performance. Best values are bolded and
second-best values are underlined. The ``-poly'' and ``-tree'' ablations remove
the polynomial and tree-based residualization stages of BLITZ, respectively.}
\label{table:causal_discovery}
\end{threeparttable}
\end{sidewaystable}

Overall, BLITZ achieves the best average rank for conditional endpoint F1, accuracy, and MCC, indicating that it provides the most reliable endpoint orientations among the adjacencies retained by the discovery algorithm. Decomposing F1 into its components, BLITZ further achieves the best precision and recall (Supplementary Table~\ref{table:conditional_endpoint_precision_recall}). Finally, excluding ablations, BLITZ was the fastest CI test overall.

Consistent with our proposed mechanism, BLITZ achieves strong minimum-oracle separating-set size match rates, suggesting that it more often recovers separating sets at the correct conditioning order for true nonedges. In contrast, RCIT, RCoT, and CCI more frequently deviated from the minimum oracle separating-set size, consistent with the hypothesis that anti-conservative CI tests distort the separating sets used for orientation. BLITZ’s unconditional exact endpoint recovery metrics are more moderate, with the exception of precision and F1, and its adjacency accuracy is near average (Supplementary Table~\ref{table:additional_structural_metrics}). Together, these results suggest that BLITZ’s primary advantage lies in endpoint orientation conditional on retained adjacencies, rather than in uniformly improving all aspects of graph recovery.

\subsection{Real Data}

We next evaluated the CI tests within each causal discovery algorithm on the CYTO dataset, which contains single-cell flow cytometry measurements of 11 phosphoproteins and phospholipids \cite{Sachs05}. We used a preprocessed version of the dataset from \cite{Ramsey18}. To remove intervention-specific shifts induced by the corresponding root nodes, we standardized each variable to have mean zero and unit variance within each intervention condition. Each algorithm was then run over 50 bootstrap resamples.

Table \ref{table:cyto_causal_discovery} summarizes the CYTO causal discovery results. We do not report MCC because it was undefined for most algorithms, reflecting degenerate endpoint label distributions in which the MCC denominator collapsed. BLITZ achieved the highest overall F1 score and the second-highest accuracy for conditional endpoint discovery. The precision--recall decomposition showed that BLITZ also attained the highest recall and the second highest precision (Supplementary Table \ref{table:cyto_conditional_endpoint_precision_recall}). Although FCIT achieved the highest conditional endpoint accuracy, it also had very low adjacency recall, indicating that it recovered a sparse graph and evaluated endpoint marks on only a small subset of retained edges. BLITZ was also the fastest non-ablation method, completing the analysis approximately 2--3 times faster than RCIT and RCoT. The separating-set match rate was approximately 50\% across algorithms. Overall, the CYTO results are consistent with the synthetic experiments: BLITZ was the fastest non-ablation method and remained among the strongest methods across the main evaluation criteria, ranking first or second on the principal endpoint recovery metrics when all methods were considered.

\begin{table}
\centering
\begin{tabular}{lccccc}
\hline
Algorithm & Test & F1 & Accuracy & SepSizeMatch & Time \\ \midrule
\multirow{7}{*}{PC} & -poly & 0.841 [0.840, 0.842] & 0.799 [0.798, 0.800] & 0.499 [0.490, 0.509] & 5.180 [4.827, 5.534] \\
 & BLITZ & 0.840 [0.838, 0.843] & 0.798 [0.795, 0.801] & 0.499 [0.490, 0.509] & 4.905 [4.566, 5.244] \\
 & FCIT & 0.838 [0.820, 0.857] & 0.811 [0.787, 0.835] & 0.531 [0.521, 0.541] & 140.2 [130.3, 150.0] \\
 & RCIT & 0.798 [0.772, 0.824] & 0.754 [0.728, 0.780] & 0.495 [0.481, 0.509] & 14.52 [13.54, 15.51] \\
 & CCI & 0.774 [0.744, 0.805] & 0.728 [0.696, 0.761] & 0.491 [0.476, 0.505] & 388.8 [340.1, 437.6] \\
 & RCoT & 0.734 [0.701, 0.768] & 0.682 [0.647, 0.716] & 0.491 [0.480, 0.503] & 15.79 [14.75, 16.84] \\
 & -tree & 0.609 [0.574, 0.644] & 0.552 [0.517, 0.588] & 0.499 [0.489, 0.509] & 1.333 [1.263, 1.403] \\
\hline
\multirow{7}{*}{FCI} & FCIT & 0.850 [0.838, 0.861] & 0.825 [0.808, 0.843] & 0.538 [0.528, 0.548] & 137.0 [126.7, 147.4] \\
 & BLITZ & 0.841 [0.839, 0.843] & 0.799 [0.796, 0.801] & 0.499 [0.490, 0.509] & 4.844 [4.518, 5.170] \\
 & -poly & 0.841 [0.839, 0.843] & 0.799 [0.796, 0.801] & 0.499 [0.490, 0.509] & 5.343 [4.958, 5.728] \\
 & RCoT & 0.840 [0.834, 0.846] & 0.797 [0.790, 0.803] & 0.493 [0.481, 0.505] & 34.09 [21.08, 47.11] \\
 & RCIT & 0.832 [0.820, 0.844] & 0.789 [0.776, 0.802] & 0.495 [0.482, 0.507] & 19.00 [15.11, 22.89] \\
 & CCI & 0.823 [0.801, 0.844] & 0.780 [0.759, 0.802] & 0.492 [0.477, 0.507] & 646.0 [490.4, 801.6] \\
 & -tree & 0.634 [0.595, 0.672] & 0.580 [0.541, 0.619] & 0.501 [0.490, 0.511] & 10.64 [4.365, 16.91] \\
\hline
\multirow{7}{*}{RFCI} & BLITZ & 0.841 [0.840, 0.842] & 0.799 [0.798, 0.800] & 0.499 [0.490, 0.509] & 4.884 [4.549, 5.219] \\
 & -poly & 0.841 [0.839, 0.843] & 0.799 [0.796, 0.801] & 0.499 [0.490, 0.509] & 5.136 [4.769, 5.504] \\
 & FCIT & 0.821 [0.800, 0.841] & 0.794 [0.768, 0.821] & 0.529 [0.520, 0.538] & 131.4 [122.3, 140.6] \\
 & RCIT & 0.812 [0.791, 0.833] & 0.767 [0.744, 0.790] & 0.501 [0.489, 0.512] & 14.47 [13.47, 15.46] \\
 & CCI & 0.762 [0.724, 0.799] & 0.718 [0.680, 0.756] & 0.491 [0.477, 0.506] & 394.6 [346.1, 443.1] \\
 & RCoT & 0.731 [0.690, 0.773] & 0.681 [0.639, 0.722] & 0.497 [0.485, 0.509] & 15.72 [14.50, 16.93] \\
 & -tree & 0.578 [0.537, 0.619] & 0.525 [0.484, 0.565] & 0.499 [0.489, 0.509] & 1.324 [1.250, 1.398] \\
\hline
\multirow{7}{*}{Avg Rank} & BLITZ & \textbf{1.83} & \underline{2.17} & 3.50 & \textbf{1.67} \\
 & -poly & \textbf{1.83} & \underline{2.17} & 3.50 & \underline{2.67} \\
 & FCIT & \underline{2.33} & \textbf{1.67} & \textbf{1.00} & 6.00 \\
 & RCIT & 4.33 & 4.33 & 4.00 & 4.00 \\
 & CCI & 5.33 & 5.33 & 7.00 & 7.00 \\
 & RCoT & 5.33 & 5.33 & 6.00 & 5.00 \\
 & -tree & 7.00 & 7.00 & \underline{3.00} & \textbf{1.67} \\
\hline
\end{tabular}
\caption{\textbf{CYTO causal discovery results.} BLITZ showed the same qualitative pattern observed in the synthetic causal discovery experiments: it achieved strong conditional endpoint recovery while remaining the fastest non-ablation method.}
\label{table:cyto_causal_discovery}
\end{table}

\section{Conclusion}

We proposed BLITZ, a CI test based on two-stage
residualization. In its fastest implementation, BLITZ pairs polynomial
regression with decision-tree residualization. This pairing is computationally
attractive because the polynomial stage removes broad smooth structure that
would otherwise require many tree leaves to approximate. For
\(\beta\)-Hölder second-stage targets on an \(s\)-dimensional conditioning
space, where \(s=\dim(\bm Z)\) and \(\beta\) is the Hölder smoothness exponent, our
analysis shows that if polynomial preprocessing reduces the effective Hölder
seminorm by a factor \(\rho\), then the sufficient product leaf budget for the
two residualization sides is reduced by \(\rho^{2s/\beta}\). Since
\(\rho\ll 1\) in many smooth settings, this can yield a dramatic reduction in
the required tree size, allowing the second-stage trees to remain shallow and
fast.

Empirically, BLITZ completes in approximately the same amount of time as RCIT
and RCoT. This is notable because BLITZ remains calibrated, whereas RCIT and
RCoT obtain much of their speed from approximations that can lead to
miscalibrated \(p\)-values. Across null and alternative simulations, BLITZ
controlled the Type I error rate while maintaining competitive statistical
power. When embedded in constraint-based causal discovery algorithms, BLITZ was
especially strong at endpoint discovery conditional on the recovered skeleton.
This suggests that CI-test calibration matters not only for edge deletion, but
also for the separating sets recorded during skeleton search. By accepting
valid conditional independencies at more appropriate conditioning orders, BLITZ
improves the downstream orientation information available to PC-,
FCI-, and RFCI-type algorithms. BLITZ also often reduced the total runtime of
the causal discovery algorithms relative to competing tests. Taken together,
these results suggest that BLITZ is a calibrated, powerful, and fast
CI test designed for the computational and statistical
demands of causal discovery.

The two-stage residualization machinery underlying BLITZ may also prove useful beyond purely constraint-based causal discovery. Fast, calibrated CI tests could help screen candidate parent sets, define local lack-of-fit diagnostics, or restrict the search space before applying score-based or hybrid causal discovery methods. These extensions are not immediate, however, because score-based methods optimize global graph objectives rather than individual CI decisions \cite{Chickering02}. In addition, existing nonparametric score-based approaches evaluate candidate graphs using Gaussian-style likelihoods in kernel feature spaces together with fixed-fold cross-validation or related practical tuning rules, whose finite-sample and asymptotic behavior can be difficult to justify theoretically \cite{Huang18,Ren25}. Consequently, calibration at the level of individual CI tests does not by itself yield a calibrated or consistent graph score. Additional theory would therefore be needed to determine how BLITZ-style residualization should be incorporated into global graph search.

BLITZ also carries several limitations. First, our theoretical analysis is stated for \(\beta\)-Hölder second-stage
targets on an \(s\)-dimensional conditioning space, and therefore does not
directly cover highly irregular, discontinuous, or adversarial conditional mean
structures. Moreover, because the tree approximation rate scales as
\(K^{-\beta/s}\), the advantage of the method can deteriorate as the dimension
of the conditioning set increases. This limitation is usually mitigated in
constraint-based causal discovery, where many CI queries involve relatively
small conditioning sets, especially at early stages of skeleton search.
Nevertheless, BLITZ should not be interpreted as a general solution to
high-dimensional nonparametric CI testing.

Second, BLITZ benefits most when the polynomial stage removes a meaningful
amount of smooth, globally approximable conditional structure. If the remaining
dependence is entirely local or poorly represented by the residual feature
dictionary, the complexity reduction factor \(\rho\) may be close to one and
power may be reduced against such alternatives. These cases are possible but
somewhat adversarial: the dependence must both evade the polynomial stage and
be weakly represented by the chosen residual transformations. Finally, BLITZ
uses a moment-matched weighted chi-square approximation after data-adaptive
residualization, so calibration is expected to be more reliable in moderate to
large samples than in very small samples. This is a practical rather than
unexpected limitation, since fully nonparametric CI testing is unlikely to
offer much advantage over simpler parametric tests in small samples. Overall, future
work should extend the theory to broader function classes, improve robustness
to larger conditioning sets, and develop richer but still fast residual feature
dictionaries and null approximations.

We conclude that BLITZ provides a calibrated and computationally efficient CI test for repeated use in constraint-based causal discovery. Its two-stage residualization strategy enables strong Type I error control, competitive power, and improved endpoint discovery without sacrificing the speed needed for large numbers of CI queries.

\section*{Author's statements}

\subsection*{Funding information}

No funding was received for this study.

\subsection*{Author contributions}

Eric V. Strobl: conceptualization, methodology, software, formal analysis, investigation, writing--original draft, and writing--review and editing.

\subsection*{Conflict of interest}

The author declares no conflict of interest.

\subsection*{Ethical approval}

Not applicable.

\subsection*{Informed consent}

Not applicable.

\subsection*{Data availability statement}

The CYTO dataset analyzed in this article is publicly available from the source cited in the manuscript.

\bibliographystyle{plain}
\bibliography{biblio}

\section{Supplementary Materials}

\subsection{Computational Complexity} \label{supp:complexity}

Let \(n\) denote the sample size and let \(s=\dim(\bm Z)\) denote the number of nonconstant conditioning variables. BLITZ maps each scalar first-stage residual to two nonlinear features, so the transformed \(X\)- and \(Y\)-side feature dimensions are both equal to two. Consequently, the residual cross-covariance matrix has size \(2\times 2\), and the product-feature covariance matrix used for the null approximation has dimension \(4\times 4\).

Let \(r\) be the polynomial degree and let \(D_r\) be the number of polynomial basis functions in \(\bm Z\), including the intercept. With all interactions through degree \(r\), \(D_r=\binom{s+r}{r}\). In the default degree-two implementation, \(D_2=1+2s+\binom{s}{2}=O(s^2)\). The polynomial stage forms the normal equations, solves the resulting \(D_r\times D_r\) ridge-regularized least-squares problem, and computes residuals for both \(X\) and \(Y\). Since \(X\) and \(Y\) are scalar, this stage costs \(
O(nD_r^2+D_r^3).
\) For the default degree-two polynomial basis, this becomes \(O(ns^4+s^6)\).

The second stage fits shallow regression trees to remove remaining dependence of the transformed features on \(\bm Z\). For one tree fit, the implementation first presorts the active training rows along each conditioning variable, then recursively evaluates candidate splits and repartitions the sorted row lists. If \(s\) variables are evaluated at each split and \(H\) is the fitted or capped tree depth, the cost of one tree fit and prediction step is
\(
T_{\rm tree}
=
O(sn\log n+snH).
\)
The \(sn\log n\) term comes from presorting the conditioning variables. The \(snH\) term accounts for split evaluation over the selected variables, repartitioning of the sorted lists, and prediction through the fitted tree.

Let \(M=|\mathcal G|\) denote the number of candidate tree-size parameters. In the scalar case, the \(X\)-side transformed feature matrix has two columns and the \(Y\)-side transformed feature matrix has two columns. If \(M>1\) and both transformed features are used for tuning, BLITZ fits \(4M\) trees during out-of-bag tuning and four additional trees for the final residualization step. Thus the second-stage cost is
\(
O((4M+4)T_{\rm tree}).
\)
If no tuning is performed because \(M=1\), this reduces to four final tree fits.

The remaining operations are lower order. Standardization and nonlinear feature construction cost \(O(n)\). Computing the \(2\times 2\) residual cross-covariance matrix costs \(O(n)\). The null approximation uses the eigenvalues of a \(4\times 4\) product-feature covariance matrix, whose construction costs \(O(n)\) and whose eigendecomposition has constant cost. Therefore, in the scalar case,
\[
T_{\rm BLITZ}
=
O(nD_r^2+D_r^3+(4M+4)T_{\rm tree}+n).
\]
Under the default degree-two polynomial basis, this simplifies to
\[
T_{\rm BLITZ}
=
O(ns^4+s^6+(4M+4)s n(\log n+H)+n).
\]
For the low- and moderate-dimensional conditioning sets targeted by BLITZ, \(s\), \(M\), and \(H\) are small relative to \(n\). In this regime, the per-query complexity is near-linear in the sample size, with the dominant scaling approximately \(O(n\log n)\). This near-linear behavior explains why BLITZ can be used repeatedly inside constraint-based causal discovery algorithms, where thousands of scalar CI queries may be required.

\subsection{Proofs} \label{supp:proofs}

\begin{reptheorem}{thm:residuals}
Let \((\bm X,\bm Y,\bm Z)\) be random variables with \(\bm X\in\mathbb R^{d_X}\) and
\(\bm Y\in\mathbb R^{d_Y}\). Let \(m_{\bm X}\) and \(m_{\bm Y}\) be any measurable functions of
\(\bm Z \), and define
\[
R_{\bm X} \coloneqq \bm X-m_{\bm X}(\bm Z), \qquad R_{\bm Y} \coloneqq \bm Y-m_{\bm Y}(\bm Z).
\]
Then
\[
\bm X\perp\!\!\!\perp \bm Y\mid \bm Z
\quad\Longleftrightarrow\quad
\operatorname{Cov}\{f(R_{\bm X}),g(R_{\bm Y})\mid \bm Z\}=0
\quad\text{a.s.}
\]
for all square-integrable measurable functions \(f\) and \(g\).
\end{reptheorem}

\begin{proof}
Fix \(\bm Z=z\). The map \(x\mapsto x-m_{\bm X}(z)\) is bijective, with inverse
\(r\mapsto r+m_{\bm X}(z)\). Thus, conditional on \(\bm Z \), replacing \(\bm X\) by
\(R_{\bm X}\) preserves exactly the same information. Equivalently,
\(
\sigma(\bm X,\bm Z)=\sigma(R_{\bm X},\bm Z).
\)
The same argument gives
\(
\sigma(\bm Y,\bm Z)=\sigma(R_{\bm Y},\bm Z).
\)
Therefore
\(
\bm X\perp\!\!\!\perp \bm Y\mid \bm Z
\quad\Longleftrightarrow\quad
R_{\bm X}\perp\!\!\!\perp R_{\bm Y}\mid \bm Z.
\)

It remains to characterize \(R_{\bm X}\perp\!\!\!\perp R_{\bm Y}\mid \bm Z\). If
\(R_{\bm X}\perp\!\!\!\perp R_{\bm Y}\mid \bm Z\), then for all square-integrable measurable
\(f\) and \(g\) \cite{Dawid79},
\[
\mathbb E\{f(R_{\bm X})g(R_{\bm Y})\mid \bm Z\}
=
\mathbb E\{f(R_{\bm X})\mid \bm Z\}\,
\mathbb E\{g(R_{\bm Y})\mid \bm Z\}
 \quad\text{a.s.,}\]
which is exactly
\(
\operatorname{Cov}\{f(R_{\bm X}),g(R_{\bm Y})\mid \bm Z\}=0
\quad\text{a.s.}
\)

Conversely, suppose this conditional covariance vanishes for all
square-integrable measurable \(f\) and \(g\). Take
\(f=\mathbbm 1_A\) and \(g=\mathbbm 1_B\), for arbitrary measurable sets
\(A\subseteq\mathbb R^{d_X}\) and \(B\subseteq\mathbb R^{d_Y}\). Because \(\mathbb E\{f(R_{\bm X})g(R_{\bm Y})\mid \bm Z\}
=
\mathbb E\{f(R_{\bm X})\mid \bm Z\}\,
\mathbb E\{g(R_{\bm Y})\mid \bm Z\}
 \text{ a.s.}\), we have
\[
\mathbb P(R_{\bm X}\in A,R_{\bm Y}\in B\mid \bm Z)
=
\mathbb P(R_{\bm X}\in A\mid \bm Z)\,
\mathbb P(R_{\bm Y}\in B\mid \bm Z)
\quad\text{a.s.}
\]
Hence \(R_{\bm X}\perp\!\!\!\perp R_{\bm Y}\mid \bm Z\). Combining the two equivalences proves
the result.
\end{proof}

\begin{replemma}{lem:double}[Product bound for residual covariance bias]
Let \(\widehat{u}_X,\widehat{u}_Y\in\mathbb R^n\) denote the
second-stage BLITZ residuals for a chosen pair of transformed features, and let
\(\widehat C=n^{-1}\widehat{u}_X^\top\widehat{u}_Y\). Let
\(\mathcal Z_n=\sigma(\bm Z_1,\ldots,\bm Z_n)\), and define
\[
e_{X,n}
\coloneqq
\left(
\frac1n
\left\|
\mathbb E[\widehat{u}_X\mid \mathcal Z_n]
\right\|^2
\right)^{1/2},
\qquad
e_{Y,n}
\coloneqq
\left(
\frac1n
\left\|
\mathbb E[\widehat{u}_Y\mid \mathcal Z_n]
\right\|^2
\right)^{1/2}.
\]
Assume i.i.d. observations and \(H_0:X\perp\!\!\!\perp Y\mid \bm Z\). Then
\(\left|\mathbb E[\widehat C\mid \mathcal Z_n]\right|\le e_{X,n}e_{Y,n}\).
Consequently, if \(e_{X,n}e_{Y,n}\in o_p(n^{-1/2})\), then
\(\sqrt n\,\mathbb E[\widehat C\mid \mathcal Z_n]\to 0\) in probability. In
particular, this holds if \(e_{X,n}\in o_p(n^{-1/4})\) and
\(e_{Y,n}\in o_p(n^{-1/4})\).
\end{replemma}

\begin{proof}
Let \(\mu_X\coloneqq\mathbb E[\widehat{u}_X\mid \mathcal Z_n]\) and
\(\mu_Y\coloneqq\mathbb E[\widehat{u}_Y\mid \mathcal Z_n]\). Under i.i.d.
sampling and \(H_0\), \(X_{1:n}\perp\!\!\!\perp Y_{1:n}\mid \mathcal Z_n\).
By the side-specific construction in Algorithm~\ref{alg:blitz},
\(\widehat{u}_X\) is a function only of the \(X\)-side data and
\(\mathcal Z_n\), while \(\widehat{u}_Y\) is a function only of the
\(Y\)-side data and \(\mathcal Z_n\). Hence
\[
\mathbb E[\widehat C\mid \mathcal Z_n]
=
\frac1n \mu_X^\top\mu_Y .
\]
Therefore, by Cauchy--Schwarz,
\[
\left|
\mathbb E[\widehat C\mid \mathcal Z_n]
\right|
\le
\frac1n\|\mu_X\|\,\|\mu_Y\|
=
e_{X,n}e_{Y,n}.
\]
If \(e_{X,n}e_{Y,n}\in o_p(n^{-1/2})\), multiplying the bound by \(\sqrt n\)
gives
\(\sqrt n\,\left|\mathbb E[\widehat C\mid \mathcal Z_n]\right|\in o_p(1)\).
The balanced sufficient condition follows because
\(o_p(n^{-1/4})\,o_p(n^{-1/4})\in o_p(n^{-1/2})\).
\end{proof}

\begin{replemma}{lem:complexity_reduction}[Reduction in sufficient second-stage complexity]
Suppose Assumption~\ref{assump:standard_bound} holds. For a single side, write
\(\rho=\rho_q\). Under Assumption~\ref{assump:complexity_J}, the value of
\(K\) sufficient to approximate \(\eta_{q,m_q^\star}\) to tolerance
\(\tau>0\) is smaller than the value of \(K\) sufficient to approximate
\(\eta_{q,0}\) to the same tolerance by the factor \(\rho^{1/\alpha}\).
\end{replemma}

\begin{proof}
The approximation bound implies that tolerance \(\tau\) is achieved whenever
\(C\,\mathcal J(\eta)K^{-\alpha}\le \tau\), or equivalently whenever
\(K\ge (C\,\mathcal J(\eta)/\tau)^{1/\alpha}\).

Without first-stage residualization, a sufficient bound is therefore
\(K_0\ge (C\,\mathcal J(\eta_{q,0})/\tau)^{1/\alpha}\). For the first-stage
residualized target, Assumption~\ref{assump:complexity_J} gives
\(\mathcal J(\eta_{q,m_q^\star})\le \rho\,\mathcal J(\eta_{q,0})\). Hence it
is sufficient to take
\[
K_m
\ge
\left(
\frac{C\,\rho\,\mathcal J(\eta_{q,0})}{\tau}
\right)^{1/\alpha}
=
\rho^{1/\alpha}
\left(
\frac{C\,\mathcal J(\eta_{q,0})}{\tau}
\right)^{1/\alpha}.
\]
Thus the sufficient upper bound for the residualized target is smaller by the
factor \(\rho^{1/\alpha}\) relative to the corresponding sufficient bound for
the unreduced target.
\end{proof}

\begin{repproposition}{prop:product_complexity}[Product-complexity reduction for covariance bias]
Suppose the second-stage errors satisfy
\[
e_{X,n}\lesssim C_XJ_XK_X^{-\alpha},
\qquad
e_{Y,n}\lesssim C_YJ_YK_Y^{-\alpha}.
\]
If the first stage reduces the second-stage complexity constants by factors
\(\rho_X\) and \(\rho_Y\), then, at fixed \(K_X\) and \(K_Y\), the
covariance-bias bound is reduced by \(\rho_X\rho_Y\). Equivalently, to reach
the same covariance-bias tolerance, the sufficient product complexity
\(K_XK_Y\) is reduced by
\(
(\rho_X\rho_Y)^{1/\alpha}.
\)
In the symmetric case where \(\rho=\rho_X=\rho_Y\), this becomes
\(
\rho^{2/\alpha}.
\)
\end{repproposition}

\begin{proof}
The product bound gives
\(
e_{X,n}e_{Y,n}
\lesssim
C_XC_YJ_XJ_Y(K_XK_Y)^{-\alpha}.
\)
If the first stage replaces \(J_XJ_Y\) by
\(\rho_X\rho_YJ_XJ_Y\), then the covariance-bias constant is reduced by
\(\rho_X\rho_Y\). To make the product error at most \(\tau_C\), it is
sufficient that
\(
C_XC_YJ_XJ_Y(K_XK_Y)^{-\alpha}\le \tau_C.
\)
Solving for \(K_XK_Y\) gives a sufficient product complexity proportional to
\[
\left(
\frac{C_XC_YJ_XJ_Y}{\tau_C}
\right)^{1/\alpha}.
\]
After the first-stage reduction, this expression is multiplied by
\((\rho_X\rho_Y)^{1/\alpha}\). The symmetric case follows
immediately.
\end{proof}

\begin{replemma}{lem:trees}[Piecewise-constant approximation of H\"older functions]
Let \(\mathcal Z\subset\mathbb R^s\) be a bounded rectangle, and let
\(h:\mathcal Z\to\mathbb R\) be \(\beta\)-H\"older, \(0<\beta\le 1\), with
seminorm
\[
[h]_{\beta,\mathcal Z}
\coloneqq
\sup_{\substack{z,z'\in\mathcal Z\\ z\neq z'}}
\frac{|h(z)-h(z')|}{\|z-z'\|^\beta}.
\]
Let \(\mathcal T_K\) contain all axis-aligned piecewise-constant functions with
at most \(K\) rectangular cells. Then, for all sufficiently large \(K\),
\[
\inf_{t\in\mathcal T_K}\|h-t\|_{L_2(\mathbb P_{\bm Z})}
\le
C_{\mathcal Z,s,\beta}[h]_{\beta,\mathcal Z}K^{-\beta/s},
\]
where \(C_{\mathcal Z,s,\beta}<\infty\) depends on the domain, dimension,
and smoothness exponent.
\end{replemma}

\begin{proof}
Let \(\mathcal Z=\prod_{\ell=1}^s[a_\ell,b_\ell]\) and set
\(L\coloneqq\max_{1\le \ell\le s}(b_\ell-a_\ell)\). Let
\(M=\lfloor K^{1/s}\rfloor\). For \(K\) large enough, \(M\ge 1\). Partition
each coordinate interval \([a_\ell,b_\ell]\) into \(M\) equal subintervals, and
take the Cartesian products of these subintervals to form a grid of
\(M^s\le K\) axis-aligned rectangles covering \(\mathcal Z\). On each rectangle
\(Q\), choose a representative point \(z_Q\), and define \(t(z)=h(z_Q)\) for
\(z\in Q\). Each rectangle has diameter at most
\(s^{1/2}L/M\). Hence, for \(z\in Q\),
\[
|h(z)-t(z)|
=
|h(z)-h(z_Q)| \le [h]_{\beta,\mathcal Z} \|z - z_Q \|^\beta
\le
[h]_{\beta,\mathcal Z}
\left(\frac{s^{1/2}L}{M}\right)^\beta,
\]
where the first inequality follows from the seminorm. Therefore,
\[
\|h-t\|_{L_2(\mathbb P_{\bm Z})}
\le
\|h-t\|_\infty
\le
[h]_{\beta,\mathcal Z}s^{\beta/2}L^\beta M^{-\beta}.
\]
Since \(M=\lfloor K^{1/s}\rfloor\), there exists a constant
\(C_{\mathcal Z,s,\beta}<\infty\) such that
\(M^{-\beta}\le C_{\mathcal Z,s,\beta}K^{-\beta/s}\). Thus
\[
\inf_{t\in\mathcal T_K}\|h-t\|_{L_2(\mathbb P_{\bm Z})}
\le
C_{\mathcal Z,s,\beta}[h]_{\beta,\mathcal Z}K^{-\beta/s}.
\]
\end{proof}

\begin{reptheorem}{thm:poly_tree}[Polynomial--tree leaf-budget reduction]
Let \(\eta\) be a second-stage target and let \(h=\eta-p\) be the target left
after polynomial preprocessing. Suppose
\[
[h]_{\beta}\le \rho[\eta]_{\beta},
\qquad 0<\rho<1.
\]
Then the sufficient one-sided tree leaf budget needed to reach the same
approximation tolerance is reduced by
\(
\rho^{s/\beta}.
\)
If the same reduction factor \(\rho\) holds on both sides of the covariance
statistic, then the
sufficient product leaf budget \(K_XK_Y\) is reduced by
\(
\rho^{2s/\beta}.
\)
\end{reptheorem}
\begin{proof}
By Lemma~\ref{lem:trees}, the tree approximation bound has exponent \(\alpha=\beta/s\) and second-stage complexity constant \(\mathcal J(h)=[h]_{\beta}\). Lemma~\ref{lem:complexity_reduction} therefore gives a one-sided sufficient leaf-budget reduction of \(\rho^{1/\alpha}=\rho^{s/\beta}\). The product leaf-budget conclusion follows from Proposition~\ref{prop:product_complexity} with \(\alpha=\beta/s\).
\end{proof}

\begin{reptheorem}{thm:weighted_chisq}[Weighted chi-square null law for the BLITZ statistic]
Assume i.i.d. observations and \(H_0:X\perp\!\!\!\perp Y\mid \bm Z\). 
Assume that the vector-valued residual conditional-mean errors satisfy
\(e_{X,n}e_{Y,n}\in o_p(n^{-1/2})\).
Further suppose that the centered estimated product features satisfy
\[
\frac{1}{\sqrt n}\sum_{i=1}^{n}
\left[
\widehat{\bm W}_i-\mathbb{E}\!\left(\widehat{\bm W}_i\mid \mathcal Z_n\right)
\right]
\xrightarrow{d}
N(0,\Omega),
\]
where \(\Omega\in\mathbb R^{pq\times pq}\) is finite and positive
semidefinite. Then
\[
S
=
n\|\widehat C\|_F^2
=
\left\|
\frac{1}{\sqrt n}\sum_{i=1}^{n}\widehat{\bm W}_i
\right\|^2
\xrightarrow{d}
\sum_{r=1}^{pq}\lambda_r\chi_{1,r}^{2},
\]
where \(\lambda_1,\ldots,\lambda_{pq}\) are the eigenvalues of \(\Omega\), and
\(\chi_{1,1}^{2},\ldots,\chi_{1,pq}^{2}\) are independent chi-square random
variables with one degree of freedom.
\end{reptheorem}

\begin{proof}
By definition,
\[
\widehat C
=
\frac{1}{n}\widehat{\bm {U}}_X^{\top}\widehat{\bm {U}}_Y
=
\frac{1}{n}\sum_{i=1}^{n}
\widehat{\bm {U}}_{X,i}\widehat{\bm {U}}_{Y,i}^{\top}.
\]
Vectorizing both sides gives
\[
\operatorname{vec}(\widehat C)
=
\frac{1}{n}\sum_{i=1}^{n}
\operatorname{vec}\!\left(
\widehat{\bm {U}}_{X,i}\widehat{\bm {U}}_{Y,i}^{\top}
\right)
=
\frac{1}{n}\sum_{i=1}^{n}\widehat{\bm W}_i.
\]
Therefore,
\[
S
=
n\|\widehat C\|_F^2
=
n\|\operatorname{vec}(\widehat C)\|^2
=
\left\|
\frac{1}{\sqrt n}\sum_{i=1}^{n}\widehat{\bm W}_i
\right\|^2.
\]

We next show that the conditional-mean contribution is negligible. Let
\(\widehat\mu_{X,i}\coloneqq
\mathbb E(\widehat{\bm {U}}_{X,i}\mid \mathcal Z_n)\) and
\(\widehat\mu_{Y,i}\coloneqq
\mathbb E(\widehat{\bm {U}}_{Y,i}\mid \mathcal Z_n)\). Under i.i.d. sampling and the null,
\(X_{1:n}\perp\!\!\!\perp Y_{1:n}\mid \mathcal Z_n\). By the side-specific
construction in Algorithm~\ref{alg:blitz}, \(\widehat{\bm U}_X\) is a
measurable function only of the \(X\)-side data and \(\mathcal Z_n\), while
\(\widehat{\bm U}_Y\) is a measurable function only of the \(Y\)-side data and
\(\mathcal Z_n\). Hence, \(\widehat{\bm {U}}_X\perp\!\!\!\perp \widehat{\bm {U}}_Y\mid
\mathcal Z_n\). Hence
\[
\mathbb E\!\left(\widehat{\bm W}_i\mid \mathcal Z_n\right)
=
\operatorname{vec}\!\left(
\widehat\mu_{X,i}\widehat\mu_{Y,i}^{\top}
\right).
\]
It follows that
\[
\left\|
\frac{1}{\sqrt n}\sum_{i=1}^{n}
\mathbb E\!\left(\widehat{\bm W}_i\mid \mathcal Z_n\right)
\right\|
=
\frac{1}{\sqrt n}
\left\|
\sum_{i=1}^{n}
\widehat\mu_{X,i}\widehat\mu_{Y,i}^{\top}
\right\|_F.
\]
By Cauchy--Schwarz,
\[
\left\|
\sum_{i=1}^{n}
\widehat\mu_{X,i}\widehat\mu_{Y,i}^{\top}
\right\|_F
\le
\left\{
\sum_{i=1}^{n}\|\widehat\mu_{X,i}\|^2
\right\}^{1/2}
\left\{
\sum_{i=1}^{n}\|\widehat\mu_{Y,i}\|^2
\right\}^{1/2}.
\]
Using \(e_{X,n}e_{Y,n}\in o_p(n^{-1/2})\),
\[
\left\|
\frac{1}{\sqrt n}\sum_{i=1}^{n}
\mathbb E\!\left(\widehat{\bm W}_i\mid \mathcal Z_n\right)
\right\|
\le
\sqrt n\,e_{X,n}e_{Y,n}
=
o_p(1).
\]

Now decompose the normalized estimated product-feature sum:
\[
\frac{1}{\sqrt n}\sum_{i=1}^{n}\widehat{\bm W}_i
=
\frac{1}{\sqrt n}\sum_{i=1}^{n}
\left[
\widehat{\bm W}_i-\mathbb E\!\left(\widehat{\bm W}_i\mid \mathcal Z_n\right)
\right]
+
\frac{1}{\sqrt n}\sum_{i=1}^{n}
\mathbb E\!\left(\widehat{\bm W}_i\mid \mathcal Z_n\right).
\]
The first term converges in distribution to \(N(0,\Omega)\) by assumption, and
the second term is \(o_p(1)\). Hence, by Slutsky's theorem,
\[
\frac{1}{\sqrt n}\sum_{i=1}^{n}\widehat{\bm W}_i
\xrightarrow{d}
G,
\qquad
G\sim N(0,\Omega).
\]
By the continuous mapping theorem,
\[
S
=
\left\|
\frac{1}{\sqrt n}\sum_{i=1}^{n}\widehat{\bm W}_i
\right\|^2
\xrightarrow{d}
\|G\|^2.
\]
Let \(\Omega=Q\Lambda Q^\top\), where \(Q\) is orthogonal and
\(\Lambda=\operatorname{diag}(\lambda_1,\ldots,\lambda_{pq})\). If
\(\xi\sim N(0,I_{pq})\), then \(G\) has the same distribution as
\(Q\Lambda^{1/2}\xi\). Therefore
\[
\|G\|^2
=
\xi^\top\Lambda\xi
=
\sum_{r=1}^{pq}\lambda_r\xi_r^2
=
\sum_{r=1}^{pq}\lambda_r\chi_{1,r}^{2}.
\]
\end{proof}

\newpage
\subsection{Additional Empirical Test Results}

\begin{figure}[!h]
    \centering
    \includegraphics[width=1\linewidth]{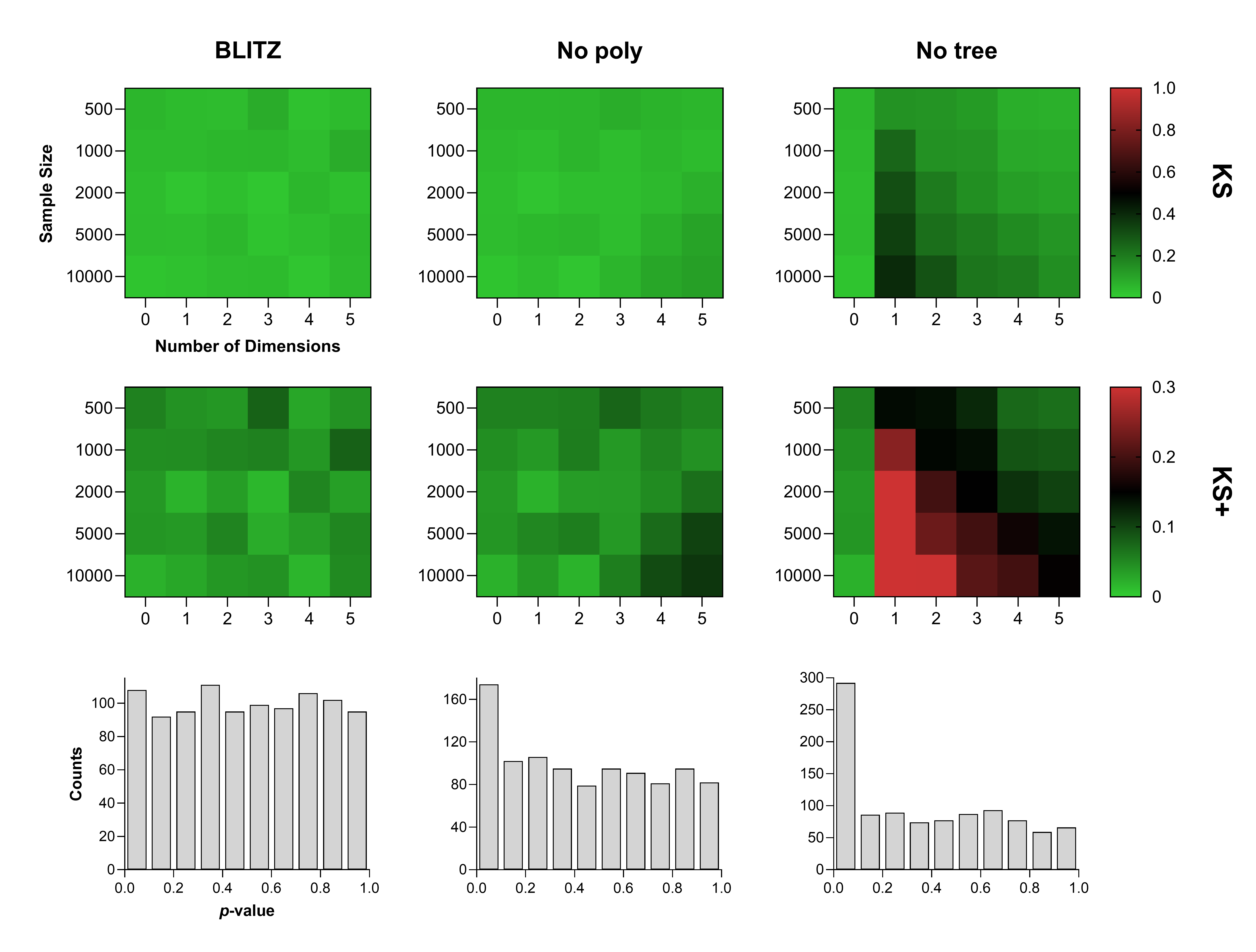}
    \caption{
    \textbf{Type I error ablations.} All heatmaps share the same x- and y-axis labels; likewise, all histograms use the same axis labels. Each column corresponds to one CI test. BLITZ provided the best overall Type I error control as measured by the KS statistic (first row). Although the no-polynomial ablation was close to BLITZ by the ordinary KS statistic, a more targeted analysis using the positive KS statistic, which measures anti-conservativeness through excess low \(p\)-values, showed that BLITZ better approximated the null distribution as sample size increased, whereas the no-polynomial ablation became more anti-conservative (second row). The third row shows representative null \(p\)-value histograms for one of the most challenging settings, \(n=10000\) and \(s=5\), illustrating the distributions summarized by the KS statistics.}
    \label{fig:Type1:ablations}
\end{figure}

\begin{figure}[!h]
    \centering
    \includegraphics[width=1\linewidth]{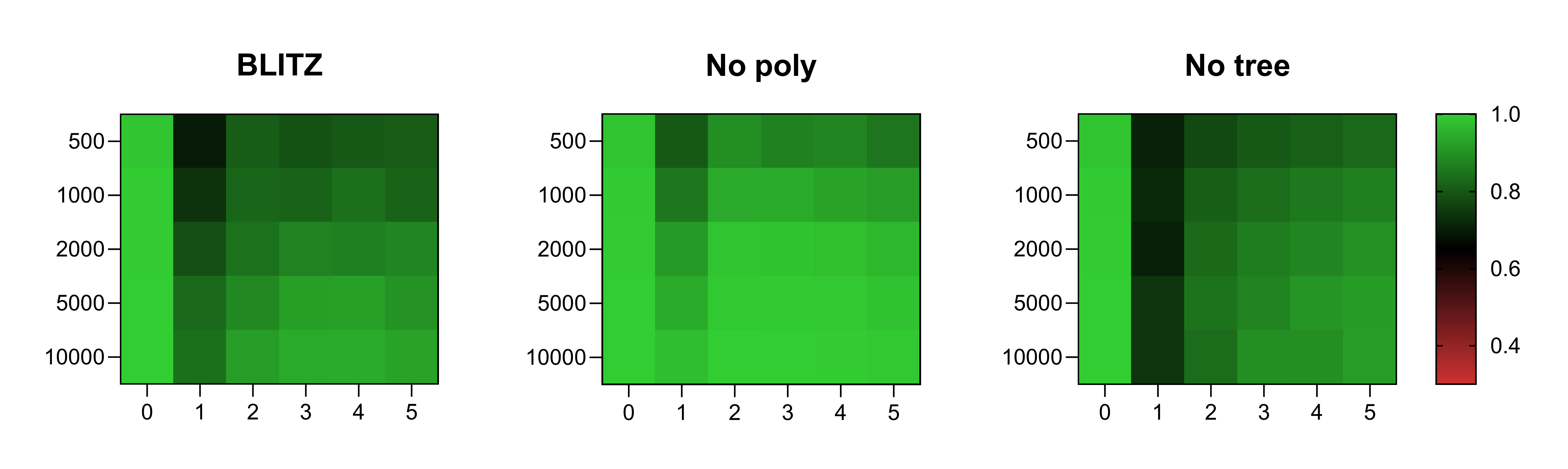}
    \caption{
    \textbf{Power ablations.} BLITZ achieved AUCPC competitive with both single-stage regression variants. As expected, using two residualization stages can modestly reduce power relative to single-stage variants that remove less \(\bm Z\)-dependent structure. In this experiment, the no-polynomial ablation achieved higher AUCPC, but this came at the cost of poorer Type I error calibration (Figure \ref{fig:Type1:ablations}).
    }\label{fig:Type2:ablations}
\end{figure}

\begin{figure}[!h]
    \centering
    \includegraphics[width=1\linewidth]{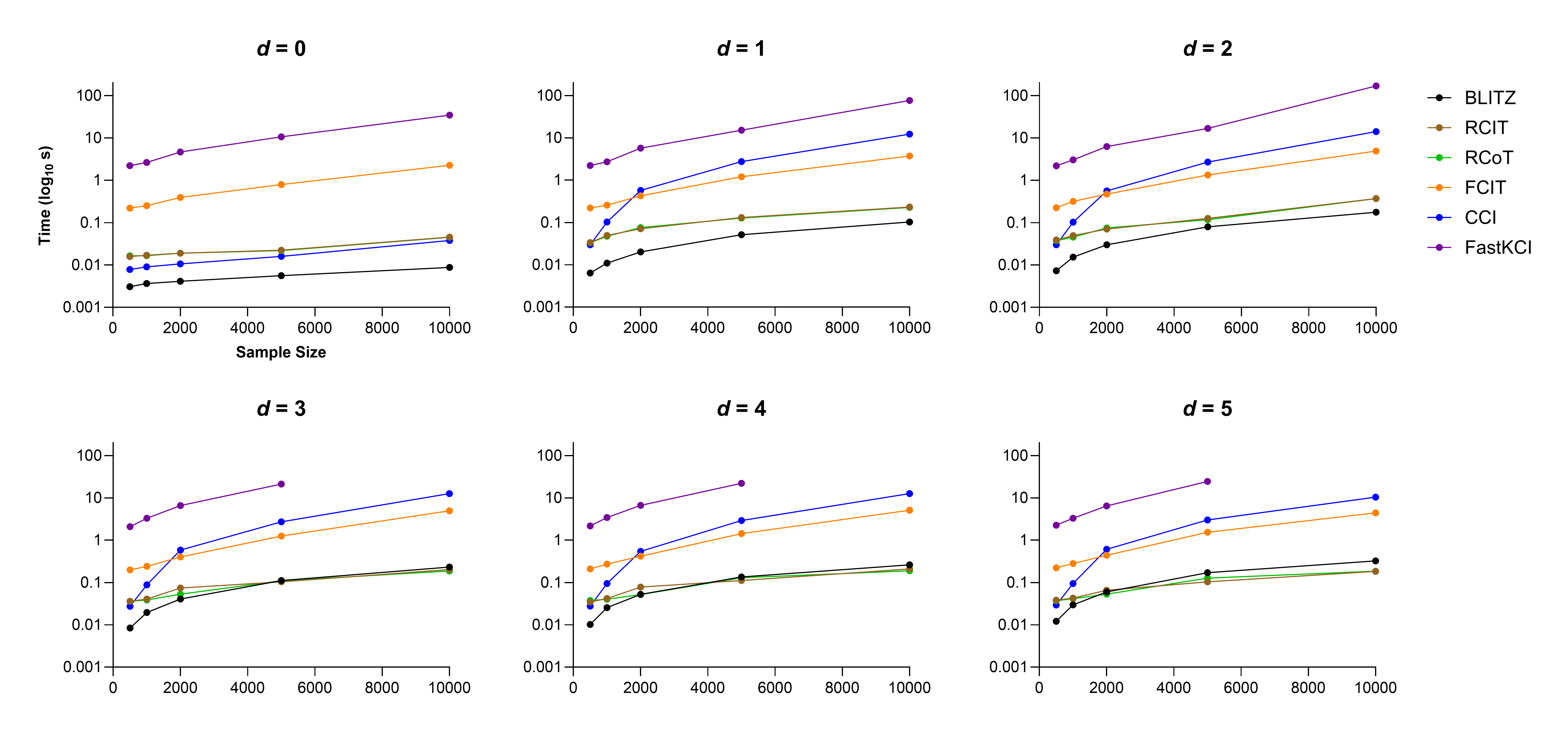}
    \caption{
    \textbf{Runtime under the alternative.} Mean wall-clock runtime for each CI test under the post-nonlinear alternative models. Results mimic those observed in Figure \ref{fig:Type1:time}.
    }
    \label{fig:Type2:time}
\end{figure}

\newpage
\subsection{Additional Causal Discovery and Real Data Results}

\begin{table}[!h]
\centering
\begin{tabular}{lccc}
\hline
Algorithm & Test & Precision & Recall \\ \midrule
\multirow{7}{*}{PC} & BLITZ & 0.786 [0.738, 0.833] & 0.794 [0.747, 0.841] \\
 & RCIT & 0.781 [0.735, 0.827] & 0.783 [0.737, 0.828] \\
 & -poly & 0.778 [0.735, 0.821] & 0.783 [0.741, 0.825] \\
 & RCoT & 0.769 [0.727, 0.810] & 0.771 [0.730, 0.813] \\
 & FCIT & 0.717 [0.668, 0.766] & 0.744 [0.696, 0.792] \\
 & CCI & 0.707 [0.666, 0.748] & 0.771 [0.732, 0.810] \\
 & -tree & 0.698 [0.646, 0.749] & 0.781 [0.735, 0.827] \\
\hline
\multirow{6}{*}{FCI} & RCIT & 0.731 [0.676, 0.785] & 0.731 [0.676, 0.785] \\
 & BLITZ & 0.701 [0.644, 0.759] & 0.704 [0.646, 0.762] \\
 & FCIT & 0.697 [0.633, 0.761] & 0.714 [0.649, 0.780] \\
 & RCoT & 0.684 [0.628, 0.740] & 0.685 [0.629, 0.741] \\
 & -poly & 0.670 [0.609, 0.731] & 0.674 [0.612, 0.735] \\
 & -tree & 0.564 [0.507, 0.621] & 0.606 [0.544, 0.667] \\
\hline
\multirow{7}{*}{RFCI} & FCIT & 0.693 [0.631, 0.755] & 0.705 [0.644, 0.766] \\
 & BLITZ & 0.617 [0.556, 0.677] & 0.623 [0.561, 0.685] \\
 & -poly & 0.603 [0.542, 0.665] & 0.610 [0.548, 0.672] \\
 & RCoT & 0.574 [0.509, 0.638] & 0.576 [0.511, 0.641] \\
 & RCIT & 0.572 [0.510, 0.633] & 0.573 [0.512, 0.635] \\
 & CCI & 0.523 [0.463, 0.584] & 0.548 [0.487, 0.609] \\
 & -tree & 0.511 [0.454, 0.567] & 0.567 [0.505, 0.628] \\
\hline
\multirow{7}{*}{Avg Rank} & BLITZ & \textbf{1.67} & \textbf{2.00} \\
 & RCIT & \underline{2.67} & \underline{3.00} \\
 & FCIT & 3.00 & 3.33 \\
 & -poly & 3.67 & 3.33 \\
 & RCoT & 4.00 & 4.33 \\
 & CCI & 6.00 & 6.50 \\
 & -tree & 6.67 & 5.33 \\
\hline
\end{tabular}
\caption{\textbf{Conditional endpoint precision and recall.} BLITZ achieved the best average rank for both conditional endpoint precision and conditional endpoint recall across the available causal discovery algorithms.}
\label{table:conditional_endpoint_precision_recall}
\end{table}

\begin{sidewaystable}
\centering
\begin{threeparttable}
\begin{tabular}{lccccccc}
\hline
Algorithm & Test & Adj Precision & Adj Recall & Adj F1 & Endpoint Precision & Endpoint Recall & Endpoint F1 \\ \midrule
\multirow{7}{*}{PC} & RCIT & 0.997 [0.993, 1.000] & 0.801 [0.767, 0.834] & 0.883 [0.860, 0.906] & 0.781 [0.735, 0.827] & 0.633 [0.586, 0.679] & 0.696 [0.650, 0.741] \\
 & RCoT & 0.996 [0.992, 1.000] & 0.818 [0.788, 0.848] & 0.894 [0.874, 0.915] & 0.769 [0.727, 0.810] & 0.633 [0.588, 0.677] & 0.691 [0.648, 0.734] \\
 & -poly & 0.993 [0.986, 0.999] & 0.806 [0.770, 0.843] & 0.883 [0.858, 0.909] & 0.778 [0.735, 0.821] & 0.633 [0.587, 0.678] & 0.693 [0.650, 0.736] \\
 & BLITZ & 0.988 [0.980, 0.997] & 0.770 [0.733, 0.807] & 0.858 [0.831, 0.885] & 0.786 [0.738, 0.833] & 0.613 [0.564, 0.662] & 0.683 [0.636, 0.730] \\
 & FCIT & 0.961 [0.940, 0.982] & 0.439 [0.399, 0.479] & 0.589 [0.549, 0.629] & 0.717 [0.668, 0.766] & 0.327 [0.290, 0.363] & 0.439 [0.398, 0.480] \\
 & CCI & 0.916 [0.893, 0.939] & 0.825 [0.790, 0.861] & 0.862 [0.836, 0.887] & 0.707 [0.666, 0.748] & 0.636 [0.593, 0.679] & 0.664 [0.625, 0.703] \\
 & -tree & 0.884 [0.857, 0.910] & 0.868 [0.836, 0.899] & 0.871 [0.847, 0.895] & 0.698 [0.646, 0.749] & 0.681 [0.633, 0.728] & 0.686 [0.638, 0.733] \\
\hline
\multirow{6}{*}{FCI} & RCIT & 1.000 [1.000, 1.000] & 0.471 [0.413, 0.529] & 0.614 [0.560, 0.669] & 0.731 [0.676, 0.785] & 0.359 [0.300, 0.418] & 0.462 [0.401, 0.524] \\
 & RCoT & 0.998 [0.995, 1.000] & 0.546 [0.493, 0.598] & 0.687 [0.642, 0.732] & 0.684 [0.628, 0.740] & 0.384 [0.330, 0.438] & 0.479 [0.424, 0.535] \\
 & BLITZ & 0.998 [0.992, 1.003] & 0.543 [0.492, 0.594] & 0.686 [0.642, 0.729] & 0.701 [0.644, 0.759] & 0.394 [0.339, 0.449] & 0.493 [0.436, 0.551] \\
 & -poly & 0.994 [0.984, 1.003] & 0.546 [0.491, 0.601] & 0.685 [0.637, 0.732] & 0.670 [0.609, 0.731] & 0.376 [0.321, 0.431] & 0.468 [0.410, 0.526] \\
 & FCIT & 0.979 [0.961, 0.998] & 0.279 [0.239, 0.319] & 0.418 [0.371, 0.464] & 0.697 [0.633, 0.761] & 0.207 [0.170, 0.244] & 0.307 [0.260, 0.354] \\
 & -tree & 0.937 [0.913, 0.962] & 0.713 [0.665, 0.762] & 0.797 [0.762, 0.831] & 0.564 [0.507, 0.621] & 0.444 [0.385, 0.504] & 0.489 [0.432, 0.546] \\
\hline
\multirow{7}{*}{RFCI} & RCIT & 0.996 [0.991, 1.000] & 0.686 [0.633, 0.738] & 0.797 [0.757, 0.838] & 0.572 [0.510, 0.633] & 0.408 [0.347, 0.469] & 0.468 [0.406, 0.530] \\
 & RCoT & 0.996 [0.991, 1.000] & 0.689 [0.646, 0.732] & 0.805 [0.775, 0.836] & 0.574 [0.509, 0.638] & 0.412 [0.352, 0.473] & 0.475 [0.412, 0.538] \\
 & BLITZ & 0.991 [0.984, 0.999] & 0.623 [0.576, 0.671] & 0.752 [0.714, 0.790] & 0.617 [0.556, 0.677] & 0.400 [0.344, 0.456] & 0.478 [0.420, 0.535] \\
 & -poly & 0.989 [0.981, 0.997] & 0.639 [0.588, 0.690] & 0.761 [0.721, 0.802] & 0.603 [0.542, 0.665] & 0.399 [0.344, 0.455] & 0.471 [0.414, 0.529] \\
 & FCIT & 0.981 [0.965, 0.998] & 0.314 [0.275, 0.354] & 0.461 [0.417, 0.505] & 0.693 [0.631, 0.755] & 0.221 [0.188, 0.254] & 0.325 [0.285, 0.365] \\
 & CCI & 0.952 [0.935, 0.968] & 0.688 [0.632, 0.744] & 0.781 [0.741, 0.821] & 0.523 [0.463, 0.584] & 0.388 [0.330, 0.446] & 0.436 [0.379, 0.493] \\
 & -tree & 0.906 [0.878, 0.933] & 0.749 [0.703, 0.794] & 0.808 [0.777, 0.839] & 0.511 [0.454, 0.567] & 0.434 [0.376, 0.492] & 0.463 [0.407, 0.519] \\
\hline
\multirow{7}{*}{Avg Rank} & RCIT & \textbf{1.00} & 4.67 & 3.67 & \underline{2.67} & 4.33 & 3.33 \\
 & RCoT & \underline{2.00} & 2.67 & \textbf{1.67} & 4.00 & \underline{3.00} & \underline{2.67} \\
 & BLITZ & 3.33 & 5.33 & 5.00 & \textbf{1.67} & 4.00 & \textbf{2.33} \\
 & -poly & 3.67 & 3.67 & 3.67 & 3.67 & 4.00 & 3.00 \\
 & FCIT & 5.00 & 6.67 & 6.67 & 3.00 & 6.67 & 6.67 \\
 & CCI & 6.00 & \underline{2.50} & 4.50 & 6.00 & 4.00 & 6.00 \\
 & -tree & 6.67 & \textbf{1.00} & \underline{2.00} & 6.67 & \textbf{1.00} & 3.67 \\
\hline
\end{tabular}
\caption{\textbf{Additional structural recovery metrics.} BLITZ did not primarily improve adjacency recovery, but it achieved the best average rank for exact endpoint precision and exact endpoint F1, indicating that its main advantage lies in the reliability of recovered endpoint orientations rather than in retaining more adjacencies.}
\label{table:additional_structural_metrics}
\end{threeparttable}
\end{sidewaystable}

\begin{table}[!h]
\centering
\begin{tabular}{lccc}
\hline
Algorithm & Test & Precision & Recall \\ \midrule
\multirow{7}{*}{PC} & FCIT & 0.811 [0.787, 0.835] & 0.870 [0.856, 0.883] \\
 & -poly & 0.799 [0.798, 0.800] & 0.888 [0.888, 0.889] \\
 & BLITZ & 0.798 [0.795, 0.801] & 0.888 [0.886, 0.889] \\
 & RCIT & 0.754 [0.728, 0.780] & 0.848 [0.823, 0.872] \\
 & CCI & 0.728 [0.696, 0.761] & 0.827 [0.800, 0.855] \\
 & RCoT & 0.682 [0.647, 0.716] & 0.798 [0.764, 0.831] \\
 & -tree & 0.552 [0.517, 0.588] & 0.680 [0.647, 0.713] \\
\hline
\multirow{7}{*}{FCI} & FCIT & 0.825 [0.808, 0.843] & 0.878 [0.870, 0.886] \\
 & BLITZ & 0.799 [0.796, 0.801] & 0.888 [0.887, 0.889] \\
 & -poly & 0.799 [0.796, 0.801] & 0.888 [0.887, 0.889] \\
 & RCoT & 0.797 [0.790, 0.803] & 0.889 [0.882, 0.895] \\
 & RCIT & 0.789 [0.776, 0.802] & 0.880 [0.870, 0.891] \\
 & CCI & 0.780 [0.759, 0.802] & 0.870 [0.848, 0.891] \\
 & -tree & 0.580 [0.541, 0.619] & 0.700 [0.662, 0.737] \\
\hline
\multirow{7}{*}{RFCI} & BLITZ & 0.799 [0.798, 0.800] & 0.888 [0.888, 0.889] \\
 & -poly & 0.799 [0.796, 0.801] & 0.888 [0.887, 0.889] \\
 & FCIT & 0.794 [0.768, 0.821] & 0.852 [0.835, 0.868] \\
 & RCIT & 0.767 [0.744, 0.790] & 0.863 [0.844, 0.882] \\
 & CCI & 0.718 [0.680, 0.756] & 0.812 [0.776, 0.848] \\
 & RCoT & 0.681 [0.639, 0.722] & 0.792 [0.752, 0.832] \\
 & -tree & 0.525 [0.484, 0.565] & 0.645 [0.604, 0.685] \\
\hline
\multirow{7}{*}{Avg Rank} & FCIT & \textbf{1.67} & 4.00 \\
 & BLITZ & \underline{2.17} & \textbf{1.83} \\
 & -poly & \underline{2.17} & \textbf{1.83} \\
 & RCIT & 4.33 & \underline{3.67} \\
 & CCI & 5.33 & 5.33 \\
 & RCoT & 5.33 & 4.33 \\
 & -tree & 7.00 & 7.00 \\
\hline
\end{tabular}
\caption{\textbf{Conditional endpoint precision and recall for CYTO.} BLITZ achieved the highest conditional endpoint recall and the second-highest conditional endpoint precision, indicating that its strong endpoint F1 reflected consistently strong performance on both components rather than a favorable precision--recall tradeoff.}
\label{table:cyto_conditional_endpoint_precision_recall}
\end{table}

\begin{sidewaystable}
\centering
\begin{threeparttable}
\begin{tabular}{lccccccc}
\hline
Algorithm & Test & Adj Precision & Adj Recall & Adj F1 & Endpoint Precision & Endpoint Recall & Endpoint F1 \\ \midrule
\multirow{7}{*}{PC} & FCIT & 0.931 [0.912, 0.950] & 0.420 [0.402, 0.438] & 0.576 [0.557, 0.594] & 0.811 [0.787, 0.835] & 0.366 [0.348, 0.383] & 0.501 [0.482, 0.520] \\
 & -poly & 0.900 [0.899, 0.900] & 0.498 [0.495, 0.501] & 0.641 [0.638, 0.644] & 0.799 [0.798, 0.800] & 0.442 [0.439, 0.445] & 0.569 [0.566, 0.572] \\
 & BLITZ & 0.899 [0.898, 0.900] & 0.496 [0.489, 0.502] & 0.639 [0.633, 0.645] & 0.798 [0.795, 0.801] & 0.440 [0.434, 0.446] & 0.567 [0.561, 0.573] \\
 & RCIT & 0.887 [0.879, 0.896] & 0.502 [0.497, 0.508] & 0.641 [0.636, 0.646] & 0.754 [0.728, 0.780] & 0.425 [0.414, 0.436] & 0.543 [0.527, 0.559] \\
 & CCI & 0.877 [0.862, 0.891] & 0.492 [0.476, 0.509] & 0.628 [0.610, 0.647] & 0.728 [0.696, 0.761] & 0.407 [0.388, 0.426] & 0.521 [0.496, 0.545] \\
 & RCoT & 0.851 [0.836, 0.866] & 0.509 [0.503, 0.515] & 0.636 [0.629, 0.643] & 0.682 [0.647, 0.716] & 0.405 [0.389, 0.421] & 0.508 [0.486, 0.529] \\
 & -tree & 0.806 [0.790, 0.822] & 0.509 [0.503, 0.515] & 0.623 [0.616, 0.630] & 0.552 [0.517, 0.588] & 0.346 [0.330, 0.361] & 0.425 [0.402, 0.447] \\
\hline
\multirow{7}{*}{FCI} & FCIT & 0.940 [0.923, 0.958] & 0.412 [0.394, 0.431] & 0.570 [0.551, 0.588] & 0.825 [0.808, 0.843] & 0.362 [0.345, 0.379] & 0.500 [0.483, 0.518] \\
 & BLITZ & 0.899 [0.898, 0.900] & 0.497 [0.492, 0.502] & 0.640 [0.635, 0.644] & 0.799 [0.796, 0.801] & 0.441 [0.436, 0.446] & 0.568 [0.563, 0.573] \\
 & -poly & 0.899 [0.898, 0.900] & 0.497 [0.492, 0.502] & 0.640 [0.635, 0.644] & 0.799 [0.796, 0.801] & 0.441 [0.436, 0.446] & 0.568 [0.563, 0.573] \\
 & RCoT & 0.897 [0.892, 0.901] & 0.499 [0.497, 0.501] & 0.641 [0.639, 0.643] & 0.797 [0.790, 0.803] & 0.443 [0.439, 0.447] & 0.570 [0.565, 0.574] \\
 & CCI & 0.896 [0.891, 0.901] & 0.498 [0.491, 0.505] & 0.640 [0.633, 0.647] & 0.780 [0.759, 0.802] & 0.433 [0.421, 0.445] & 0.556 [0.541, 0.572] \\
 & RCIT & 0.895 [0.890, 0.900] & 0.493 [0.488, 0.499] & 0.636 [0.631, 0.641] & 0.789 [0.776, 0.802] & 0.434 [0.427, 0.442] & 0.560 [0.551, 0.570] \\
 & -tree & 0.823 [0.809, 0.836] & 0.506 [0.501, 0.510] & 0.626 [0.620, 0.631] & 0.580 [0.541, 0.619] & 0.353 [0.335, 0.371] & 0.438 [0.413, 0.463] \\
\hline
\multirow{7}{*}{RFCI} & FCIT & 0.931 [0.910, 0.952] & 0.414 [0.398, 0.431] & 0.571 [0.553, 0.589] & 0.794 [0.768, 0.821] & 0.353 [0.337, 0.370] & 0.487 [0.468, 0.506] \\
 & BLITZ & 0.900 [0.899, 0.900] & 0.498 [0.495, 0.501] & 0.641 [0.638, 0.644] & 0.799 [0.798, 0.800] & 0.442 [0.439, 0.445] & 0.569 [0.566, 0.572] \\
 & -poly & 0.899 [0.898, 0.900] & 0.497 [0.492, 0.502] & 0.640 [0.635, 0.644] & 0.799 [0.796, 0.801] & 0.441 [0.436, 0.446] & 0.568 [0.563, 0.573] \\
 & RCIT & 0.887 [0.878, 0.895] & 0.499 [0.497, 0.501] & 0.638 [0.635, 0.641] & 0.767 [0.744, 0.790] & 0.431 [0.421, 0.440] & 0.551 [0.538, 0.565] \\
 & CCI & 0.879 [0.868, 0.891] & 0.499 [0.493, 0.505] & 0.636 [0.631, 0.641] & 0.718 [0.680, 0.756] & 0.404 [0.387, 0.422] & 0.517 [0.493, 0.541] \\
 & RCoT & 0.853 [0.836, 0.869] & 0.508 [0.501, 0.514] & 0.636 [0.628, 0.644] & 0.681 [0.639, 0.722] & 0.402 [0.382, 0.422] & 0.505 [0.478, 0.532] \\
 & -tree & 0.806 [0.790, 0.822] & 0.509 [0.503, 0.515] & 0.623 [0.616, 0.630] & 0.525 [0.484, 0.565] & 0.327 [0.308, 0.347] & 0.403 [0.376, 0.429] \\
\hline
\multirow{7}{*}{Avg Rank} & FCIT & \textbf{1.00} & 7.00 & 7.00 & \textbf{1.67} & 6.00 & 6.00 \\
 & BLITZ & \underline{2.50} & 4.83 & \textbf{2.17} & \underline{2.17} & \textbf{1.83} & \textbf{1.83} \\
 & -poly & \underline{2.50} & 4.83 & \textbf{2.17} & \underline{2.17} & \textbf{1.83} & \textbf{1.83} \\
 & RCIT & 4.67 & 4.17 & \underline{3.00} & 4.33 & \underline{3.33} & \underline{3.33} \\
 & CCI & 5.00 & 4.17 & 4.33 & 5.33 & 4.33 & 4.33 \\
 & RCoT & 5.33 & \underline{1.83} & 3.33 & 5.33 & 3.67 & 3.67 \\
 & -tree & 7.00 & \textbf{1.17} & 6.00 & 7.00 & 7.00 & 7.00 \\
\hline
\end{tabular}
\caption{\textbf{Additional CYTO structural recovery metrics.} As in the synthetic experiments, BLITZ also performed well on exact endpoint recovery, while remaining competitive on adjacency-level recovery. These results suggest that its CYTO performance was not limited to conditional endpoint metrics.}
\label{table:cyto_additional_structural_metrics}
\end{threeparttable}
\end{sidewaystable}

\end{document}